\newcommand{\bbM}{\ensuremath{\mathbb M}}
\newcommand{\calI}{\ensuremath{\mathcal I}}
\newcommand{\M}[1]{\ensuremath{\mathbb M_{#1}}}
\newcommand{\I}[1]{\ensuremath{\mathcal I_{#1}}}
\newcommand{\hide}[1]{}
\title{On Correcting Inputs: Inverse Optimization for Online
  Structured Prediction \thanks{Partially supported by NSF grant
     IIS-1451430.}
}
\author{Hal Daum\'e III
	\and Samir Khuller
        \and Manish Purohit
	\and Gregory Sanders}
\institute{University of Maryland, College Park, MD 20742, USA
        \\  {\tt
        	\{hal, samir, manishp, gsanders\}@cs.umd.edu} }
\begin{document}
\maketitle

\begin{abstract}
Algorithm designers typically assume that the input data is correct, and then proceed to find ``optimal'' or ``sub-optimal'' solutions using this input data.
However this assumption of correct data does not always hold in
practice, especially in the context of online learning systems where
the objective is to learn appropriate feature weights given some training samples. Such scenarios necessitate the study of inverse optimization problems where one is given an input instance as well as a desired output and the task is to adjust the input data so that the given output is indeed optimal. Motivated by learning structured prediction models, in this paper we consider inverse optimization with a margin, i.e., we require the given output to be better than all other feasible outputs by a desired margin. We consider such inverse optimization problems for maximum weight matroid basis, matroid intersection, perfect matchings, minimum cost maximum flows, and shortest paths
and derive the first known results for such problems with a
non-zero margin. 
The effectiveness of these algorithmic approaches
to online learning for structured prediction is also discussed.
\end{abstract}

\section{Introduction}
Algorithm designers generally assume that the input data is sacrosanct
and correct.
Algorithms are then typically run on this input data to compute
``optimal'' or ``sub-optimal'' solutions quickly whether it be 
the computation of a maximum spanning tree, a maximum matching, 
max weight arborescence, or shortest paths.
However, with an increasing reliance on automatic methods to collect
data, as well as in systems that learn, this assumption does not always hold.
The input data can be erroneous (even though it may be approximately
correct), and it becomes important to ``adjust'' the input data to 
achieve certain desired conditions. 

A simple example can be used to illustrate the main point --  suppose we are 
given a weighted graph $G=(V,E)$ and a spanning tree $T$, and told that $T$ {\em should} be a maximum weight spanning
tree in $G$. The goal now is to perturb the edge weights of the graph
$G$, minimizing the $L_2$ norm of the perturbation, so that $T$ is indeed the optimal spanning tree. This kind of problem has been studied
previously in the form of ``Inverse Optimization'' problems. 
However, we wish to accomplish a stronger goal of making sure that
the given tree $T$  is better than {\em every} other tree in $G$
by a given margin $\delta$.

Our initial motivation for studying this problem comes from the \emph{structured prediction} task in machine learning \cite{lafferty01crf,punyakanok01inference,collins02perceptron,taskar05mmmn,tsochantaridis05svmiso}.
For concreteness and ease of exposition, we now describe structured prediction in the context of predicting dependency parse trees for natural language sentences. Given an English sentence, its dependency parse is a rooted, directed tree that indicates the dependencies between different words in the sentence as shown in Figure \ref{fig:dependency-parse}. The input sentence can be represented as a complete, directed graph on the words of the sentence that is parameterized by \emph{features} on the edges. Given a learned model (represented as a vector of parameters), the weight of an edge is computed as the inner product of its feature vector and the model. As linguistic constraints dictate that the required dependency parse must form a rooted, spanning arborescence of the graph, one can use off-the-shelf combinatorial algorithms~\cite{edmonds67mst,chu65mst} to find the highest weight arborescence. The learning problem is thus to find a parameter vector such that once the edges are weighted by the inner products, running a combinatorial optimization algorithm would return the desired parse tree. At ``training time'', we are given a sentence as well as its correct parse tree and the problem that we need to solve is exactly the inverse optimization problem - given the current model and the parse tree, say $T$, find the minimum perturbation to the model so that the combinatorial optimization algorithm would return $T$. It is well established in the learning theory literature that achieving a large margin solution enables better generalization~\cite{crammer03mira}. We consider minimizing the $L_2$ norm because of connections to prior work~\cite{kivenen95eg}. In particular, for applications in structured prediction, the convergence and error bounds (included in Section \ref{sec:application}) require $L_2$ norm minimization.

\begin{figure}
  \centering
  \includegraphics[width=0.6\textwidth]{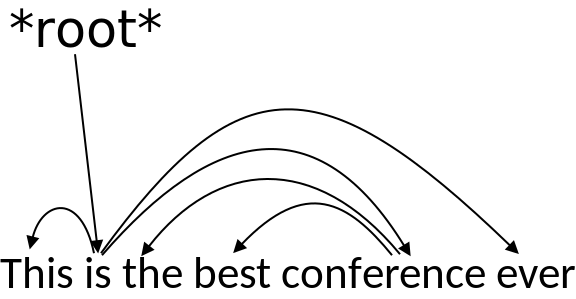}
\caption{Example dependency parse tree. The tree describes the relations between head words and their dependents in the sentence}
  \label{fig:dependency-parse}
\end{figure}


In our work we consider such inverse optimization problems with
a margin in a general matroid setting. We consider both the problem
of modifying the weights of the elements of a matroid, so that a given
basis is a maximum weight basis (with a given margin of $\delta$) and the considerably harder problem of matroid intersection
where a given basis of two matroids should have weight higher (by at least $\delta$) than any other set of elements that is a basis in the two matroids.
This framework captures
two special cases which are useful for structured prediction - namely maximum weight bipartite matching (useful for language translation) and maximum weight arborescence (useful for sentence parsing). We also consider $\delta$-margin inverse optimization problems for a number of other classical combinatorial optimzation problems such as perfect matchings, minimum cost flows and shortest path trees. In addition, we present a generic framework for online learning for structured prediction using the corresponding inverse optimization problem as a subroutine and present convergence and error bounds on this framework.

\subsection{Related Work}
\label{sec:related-work}
Inverse optimization problems have been widely studied in the Operations Research literature. 
 Most prior work however has focused on minimizing the $L_1$ or $L_{\infty}$ norms between the weight vectors and, more importantly, do not allow non-zero margin ($\delta$).
Heuberger \cite{heuberger2004} provides an excellent survey of the diverse inverse optimization problems that have been tackled. 
Both the inverse matroid optimization~\cite{dellamico2003} and matroid intersection~\cite{mao1997} have previously been studied in the setting of minimizing the $L_1$ norm and with zero margin. However, they use techniques that are specialized to minimizing the $L_1$ norm of the perturbation and do not extend to minimizing the $L_2$ norm. At the same time, these approaches to do not generalize to the general case of inverse optimization with non-zero margins.

In typical global models for structured prediction (for e.g. see \cite{lafferty01crf,mcallester04cfd,tsochantaridis05svmiso,collins02perceptron,crammer06pa,mcdonald05online}), the discrete optimization problem is considered a ``black box''. By treating the combinatorial problem as a black box, these methods lose the ability to precisely reason about how certain changes to the underlying parameter vector can affect the eventual output. The simplest approach to solving the online structured prediction problem is the structured perceptron~\cite{collins02perceptron}. On each example, the
structured perceptron makes a prediction based on its current
model. If this prediction is incorrect, the algorithm suffers
unit loss and updates its parameters with a simple linear
update that moves the predictor closer to the truth and further from
the current best guess.  While empirically successful in a number of
problems, this particular update is relatively imprecise: there are
typically an exponential number of possible outputs for any given
input, and simply promoting the correct one and demoting the models'
current prediction may do very little to move the model as far as it
needs to go. An alternative approach is the large margin discriminative approach~\cite{crammer03mira} that seeks to change the parameters as little as possible subject to the constraint that the true output has a higher score than all incorrect outputs. However, such an approach is often computationally infeasible for structured prediction as there are usually an exponential number of potential outputs. McDonald et al.\ \cite{mcdonald05online} circumvent this infeasibility by using a $k$-best list of
possible outputs and restrict the set of constraints to require that the true output has a higher score than the incorrect outputs on the $k$-best list.  This has been shown to be effective for small values of $k$ on simple parsing tasks \cite{mcdonald05online}.
However, for more complex tasks, like machine translation, one
needs more complicated update frameworks \cite{chiang11hopefear}. In this work we show that the large margin discriminative approach is applicable to a wide range of problems in structured prediction using techniques from inverse combinatorial optimization.

In the context of online prediction, the most related work to ours is that of Taskar et
al.~\cite{taskar05mmmn}, who also consider structured prediction using
inverse bipartite matchings. They define a loss function that
measures, against a ground truth matching, the number of mispredicted
edges in the found matching. This ``Hamming distance'' style loss
function nicely decomposes over the structure of the graph and thereby
admits an efficient ``loss augmented'' inference solution, in which
correct edges are penalized during learning. (The idea is that if
correct edges are penalized, but the model still produces the correct
matching, then it has done so with a sufficiently large margin.) This
idea only works in the case of decomposable loss functions, or the
simpler 0-margin formulation. In comparison, our approach works both
for decomposable loss functions as well as ``zero/one loss'' over the
entire structure. Furthermore, our approach generalizes to arbitrary
matroid intersection problems and minimum cost flows and thus is applicable to a much wider range of structured prediction problems.

\subsection{Contribution and Techniques}
\label{sec:our-results}

A lot of prior work in the inverse optimization literature formulates the problem as a linear program and then uses strong duality conditions to find the new perturbed weights. However, such techniques cannot be extended to handle a non-zero margin that is required by the application to structured prediction.
We formulate inverse optimization to minimize the $L_2$ norm of the perturbations as a quadratic program and use problem specific optimality conditions to determine a concise set of linear constraints that are both necessary and sufficient to guarantee the required margin. In particular, one of the key ingredients is a set of polynomially many linear constraints that ensure that an appropriately defined auxiliary graph does not contain small directed cycles. We note that our formulations can easily be adapted to minimize the $L_1$ norm of the perturbations by simply modifying the objective and using linear programming.

We obtain concise formulations for exactly solving $\delta$-margin inverse optimization problems for (i) maximum weight matroid basis, (ii) maximum weight basis in the intersection of two matroids, (iii) shortest $s$-$t$ path, (iv) shortest path tree, (v) minimum cost maximum flow in a directed graph. 

We also present convergence results for the generic online learning framework for structured prediction motivating our study.

The rest of the paper is organized as follows. In Section \ref{sec:problem-definition}, we formally define $\delta$-margin inverse optimization. In Sections \ref{sec:matroids} and \ref{sec:matroid-intersection}, we present our results on inverse optimization for matroids, and matroid intersections respectively. In Sections \ref{sec:perfect-matching}, Appendix \ref{sec:mincostflow}, and Appendix \ref{sec:sptrees}, we discuss inverse optimization for perfect matchings in bipartite graphs, minimum cost flows, and shortest path trees. In Section \ref{sec:application}, we describe an online learning framework for structured prediction as an application and the proof of convergence and error bounds for this learning framework are presented in Appendix \ref{sec:proofs-learn-theory}. Experimental results for our learning model are presented in Appendix \ref{sec:experiments} showing significant improvement over previous techniques.


\section{Problem Description}
\label{sec:problem-definition}

As explained in the introduction, we require a given solution to be better than all other feasible solutions by a margin of $\delta$. We now formalize this notion of $\delta$-optimality.
\begin{definition}[$\delta$-Optimality]
  For a maximization problem $P$, let $\mathcal{F}$ denote the set of feasible solutions, let $w$ be the weight vector, $c(w,A)$ denote the cost of feasible solution $A$ under weights $w$, and let $\delta \geq 0$ be a scalar. A feasible solution $S \in \mathcal{F}$ is called $\delta$-optimal under weights $w$ if and only if 
\[c(w,S) \geq c(w,S') + \delta, \quad \forall S'(\neq S) \in \mathcal{F}.\]
\end{definition}


$\delta$-optimality for minimization problems is defined similarly.
All problems we consider in this work can be classified as $\delta$-margin inverse optimization. 
\begin{definition}[$\delta$-Margin Inverse Optimization]
  For a given optimization problem $P$, let $\mathcal{F}$ denote the set of feasible solutions, let $w$ be the weight vector, let $\delta \geq 0$ be a scalar, and let $S \in \mathcal{F}$ be a given feasible solution. $\delta$-Margin Inverse optimization is to find a new weight vector $w'$ minimizing $||w' - w||_2$ ($L_2$ norm) such that $S$ is the $\delta$-optimal solution of $P$ under weights $w'$.
\end{definition}


In the following sections we consider $\delta$-margin inverse optimization for a number of problems mentioned earlier.
\hide{
We first recall the definition of a matroid.
\begin{definition}[Matroid]
A matroid is a pair $\bbM = (X, \calI)$ where $X$ is a ground set of elements and $\calI$ is a family of subsets of $X$ (called {Independent} sets) such that - 
\begin{enumerate}
\item $\calI \neq \phi$
\item (Hereditary) If $B \in \calI$, and $A \subseteq B$, then $A \in \calI$.
\item (Exchange property) If $A,B \in \calI$, and $|A| < |B|$, then there exists some element $e \in B \setminus A$ such that $A \cup \{e\} \in \calI$.
\end{enumerate}  
\end{definition}

In this paper, we consider the following problems.

\begin{definition}[Inverse Matroid Optimization]
Given a matroid $\bbM = (X,\calI)$, a weight function $w:X \rightarrow \mathbb{R}^+$, a basis $B$ of $\bbM$, and $\delta \geq 0$, the inverse matroid optimization problem is to find a new weight function $w':X \rightarrow \mathbb{R}^+$ minimizing $||w - w'||_2$ (L2 norm) such that $B$ is the $\delta$-optimal basis of $\bbM$ under $w'$.
\end{definition}

\begin{definition}[Inverse Matroid Intersection]
Given two matroids $\M{1} = (X,\I{1})$ and $\M{2} = (X,\I{2})$ defined on the same ground set $X$, a weight function $w:X \rightarrow \mathbb{R}^+$, a common basis $B$ of $\M{1}$ and $\M{2}$, and $\delta \geq 0$, the inverse matroid intersection problem is to find a new weight function $w':X \rightarrow \mathbb{R}^+$ minimizing $||w - w'||_2$ (L2 norm) such that $B$ is the $\delta$-optimum common basis of \M{1}~and \M{2} under $w'$.
\end{definition}
}


\section{Maximum weight matroid basis}
\label{sec:matroids}
In order to provide intuition about the type of problems we propose to solve in this paper, we first begin with the simple case of Inverse Matroid Optimization. We recall the definition of a matroid.
\begin{definition}[Matroid]
A matroid is a pair $\bbM = (X, \calI)$ where $X$ is a ground set of elements and $\calI$ is a family of subsets of $X$ (called {Independent} sets) such that - 
(i) $\calI \neq \phi$
(ii) (Hereditary) If $B \in \calI$, and $A \subseteq B$, then $A \in \calI$.
(iii) (Exchange property) If $A,B \in \calI$, and $|A| < |B|$, then there exists some element $e \in B \setminus A$ such that $A \cup \{e\} \in \calI$.
\end{definition}

\begin{definition}[Matroid Basis and Circuit]
  Let $\bbM = (X, \calI)$ be a matroid. Then any maximal independent set in $\calI$ is called a basis of the matroid. Conversely, any minimal dependent set is called a circuit.
\end{definition}

For the inverse problem we are given a matroid $\bbM = (X, \calI)$, a
weight function $w$ on the elements, and a basis $B$ of \bbM. The goal
is to find a weight function $w'$ so that $B$ is the $\delta$-optimal
basis of \bbM ~under the new weights. As it is well known that a
spanning tree is a basis of a graphical matroid, this inverse matroid
optimization problem directly generalizes the inverse maximum spanning
tree problem.

We first state a simple optimality condition for a given basis $B$ of a matroid $\bbM$. 
An easy generalization of \cite{schrijver2003} for $\delta \geq 0$ gives the following lemma.

\begin{lemma} \label{lem:matroids-1}
A given basis $B$ of a matroid $\bbM$ is $\delta$-optimal (under weight function $w$) if and only if for any $f \notin B$, and each $e \in C_B(f)$, $w(e) - w(f) \geq \delta$, where $C_B(f)$ denotes the unique circuit in $B \cup \{f\}$.
\end{lemma}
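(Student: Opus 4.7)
The plan is to prove both directions by reducing to single-element swap arguments, using the basic matroid exchange properties.

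For the forward ($\Rightarrow$) direction, I would argue by contradiction. Suppose $B$ is $\delta$-optimal but there exist $f \notin B$ and $e \in C_B(f)$ with $w(e) - w(f) < \delta$. Because $e \in C_B(f)$, the set $B' = (B \setminus \{e\}) \cup \{f\}$ is again independent, and has the same cardinality as $B$, hence is a basis. A direct calculation gives
\[
c(w,B) - c(w,B') = w(e) - w(f) < \delta,
\]
contradicting $\delta$-optimality of $B$. This step is straightforward and uses only the definition of $C_B(f)$.

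For the converse ($\Leftarrow$) direction, I would invoke the symmetric (or ``strong'') exchange theorem for matroid bases: for any two bases $B, B'$ there is a bijection $\pi : B \setminus B' \to B' \setminus B$ such that, for every $e \in B \setminus B'$, the set $(B \setminus \{e\}) \cup \{\pi(e)\}$ is again a basis. In particular, if we write $f = \pi(e)$, then removing $e$ from $B \cup \{f\}$ destroys the unique circuit $C_B(f)$, so $e \in C_B(f)$. The hypothesis then gives $w(e) - w(\pi(e)) \geq \delta$ for every $e \in B \setminus B'$. Summing the contributions of elements in the symmetric difference,
\[
c(w,B) - c(w,B') \;=\; \sum_{e \in B \setminus B'} \bigl( w(e) - w(\pi(e)) \bigr) \;\geq\; |B \setminus B'| \cdot \delta \;\geq\; \delta,
\]
where the last inequality uses $B \neq B'$, which forces $|B \setminus B'| \geq 1$. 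This shows $B$ is $\delta$-optimal.

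The main obstacle is the converse direction, and specifically the need for the symmetric exchange bijection rather than just the ordinary exchange axiom. A naive swap argument gives only one pair at a time and does not immediately yield a clean lower bound on $c(w,B) - c(w,B')$ when the two bases differ in many elements. Once the bijection is in hand, however, the summation argument above is routine, and the whole result drops out as a quantitative ($\delta$-weighted) refinement of the classical $\delta = 0$ optimality criterion stated in \cite{schrijver2003}.
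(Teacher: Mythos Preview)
Your argument is correct in both directions. The forward direction matches the paper's proof exactly. For the converse, however, the paper does \emph{not} invoke the Brualdi-type symmetric exchange bijection; instead it runs an elementary minimality argument: assuming the inequalities hold but $B$ is not $\delta$-optimal, it picks a violating basis $B'$ with $|B'\setminus B|$ minimum, takes any $f^*\in B'\setminus B$, finds $e^*\in C_B(f^*)\cap(B\setminus B')$ with $B''=B'-f^*+e^*$ a basis, and observes $w(B)-w(B'')\leq w(B)-w(B')<\delta$ with $|B''\setminus B|<|B'\setminus B|$, a contradiction.

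The trade-off is clear. The paper's route uses only the ordinary basis exchange property (plus the standard fact that one can choose the incoming element from $C_B(f^*)$), so it is self-contained at the level of matroid axioms, but it is indirect and yields only the bare inequality $w(B)-w(B')\geq\delta$. Your route is shorter and in fact proves the stronger statement $w(B)-w(B')\geq |B\setminus B'|\cdot\delta$, at the price of importing Brualdi's bijection theorem, which is itself a nontrivial consequence of Hall's theorem. Either approach is perfectly adequate here; your remark that ``a naive swap argument \dots\ does not immediately yield a clean lower bound'' is accurate for a direct summation proof, but the paper sidesteps this by using minimality rather than summation.
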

\hide{
\begin{proof}
 {\bf Necessity} (Easy Direction): If the condition for any $e,f$ pair is not satisfied, then $B' = B - \{e\} \cup \{f\}$ is also independent in $\bbM$ and $w(B) - w(B') = w(e) - w(f) < \delta$, which implies that $B$ is not $\delta$-optimal.

 {\bf Sufficiency}: Suppose that $w(e) - w(f) \geq \delta$ for all $e,f$ pairs. Let us assume that $B$ is not $\delta$-optimal. Let $B'$ be a basis such that $w(B) - w(B') < \delta$ and further among all such bases $B'$ is the one closest to $B$, i.e. $|B' \setminus B|$ is minimum. Consider $f^* \in B' \setminus B$. By the matroid exchange property, there exists $e^* \in B \setminus B'$ such that $e^* \in C_B(f^*)$ and $B'' = B' - \{f^*\} \cup \{e^*\}$ is also a basis. Hence,
 \begin{align}
   \label{eq:4}
   w(B'') &= w(B') + w(e^*) - w(f^*) \geq w(B') + \delta \\
\therefore   w(B) - w(B'') &\leq w(B) - w(B') < \delta    
 \end{align}
Since $|B'' \setminus B| < |B' \setminus B|$ we have a contradiction. \qed
\end{proof}
}
We thus have a set of polynomially many linear constraints that are necessary and sufficient for the given basis $B$ to be $\delta$-optimal. The inverse matroid optimization problem can then be formulated as a linearly constrained quadratic problem as follows - 
\begin{align} 
& 
\min_{w'} \sum_{e \in X}(w'(e) - w(e))^2
\quad\textbf{subj. to:}\quad \label{eq:1}\\ 
 & w'(e) - w'(f) \geq \delta,\quad
  \forall f \notin B, \forall e \in C_B(f) \label{eq:2}
\end{align}

Such a program with a quadratic objective and linear constraints can be solved in polynomial time and a number of practical solvers such as \cite{gurobi} are available.


\section{Matroid Intersection}
\label{sec:matroid-intersection}

Similar to the case with a single matroid, we need to derive a necessary and sufficient condition for a common basis $B$ of two matroids to be $\delta$-optimal. We can establish such an optimality condition with the help of an exchange graph associated with the basis $B$ and matroids $\M{1}$ and $\M{2}$.
\begin{definition}[Exchange Graph]
Given two matroids $\M{1} = (X,\I{1})$ and $\M{2}=(X,\I{2})$, a weight function $w:X \rightarrow \mathbb{R}^+$, and a common basis $B$, an \emph{exchange} graph is a directed, bipartite graph $G=(V,A)$ with a length function $l$ on edges that is defined as follows.
\begin{align}
  \label{eq:3}
  V &= B \cup X \setminus B\\
  A &= A_1 \cup A_2\\
  A_1 &= \{(x,y) | x \in B, y \in X \setminus B, B - \{x\} + \{y\} \in \I{1}\}\\
  A_2 &= \{(y,x) | x \in B, y \in X \setminus B, B - \{x\} + \{y\} \in \I{2}\}\\
  l(s) &=
  \begin{cases}
    w(x) &\text{if $s=(x,y) \in A_1$}\\
    -w(y) &\text{if $s=(y,x) \in A_2$}
  \end{cases}
\end{align}
\end{definition}

The above graph captures the exchange operations that can be performed. An edge $(e,f)$ implies that deleting $e$ and adding $f$ to $B$ preserves independence w.r.t matroid \M{1} and similarly for the other direction. As the graph is bipartite, every cycle is of even length - a cycle $C = (x_1 , y_1, x_2, y_2, \ldots x_k, y_k, x_1)$ corresponds to constructing a set $B'' = B - \{x_1, x_2, \ldots x_k\} \cup \{y_1, y_2, \ldots, y_k\}$. Further 
\[w(B'') = w(B) - \sum_{i=1}^k w(x_i) + \sum_{i=1}^k w(y_i) = w(B) - l(C)\] where $l(C) = \sum_{e \in C} l(e)$ is the sum of lengths of edges in the cycle $C$.
We are now in a position to present the $\delta$-optimality condition of $B$ in terms of the exchange graph. Fujishige~\cite{fujishige1977a} shows the following lemma for the case of $\delta = 0$. We include the extended proof for general $\delta$ margin here for completeness. It is important to note that while there are other optimality conditions for matroid intersection such as the weight decomposition theorem by Frank \cite{frank1981weighted}, these conditions do not easily generalize for non-zero $\delta$.

\begin{lemma}[Matroid Intersection $\delta$-optimality condition]
\label{lemma:optimality-condition}
The given common basis $B$ is $\delta$-optimal if and only if the exchange graph G contains no directed cycle $C$ such that $\sum_{e \in C} l(e) \leq \delta$.
\end{lemma}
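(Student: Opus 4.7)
The plan is to prove both directions of the biconditional by standard exchange arguments on the exchange graph $G$, generalizing the $\delta=0$ argument of Fujishige. Both directions are proved by contrapositive, and both hinge on the already-noted identity $w(B'') = w(B) - l(C)$ connecting a directed cycle $C$ to the swap $B'' = B \triangle V(C)$.

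For the necessity direction, suppose $G$ contains a directed cycle $C$ with $l(C) \le \delta$. Among all such cycles, choose $C^*$ with the minimum number of edges, and write its vertex sequence as $x_1, y_1, x_2, y_2, \ldots, x_k, y_k$ with $x_i \in B$ and $y_i \in X \setminus B$. Set $B'' = (B \setminus \{x_1,\ldots,x_k\}) \cup \{y_1,\ldots,y_k\}$. The heart of this direction is the Brualdi--Krogdahl ``no shortcut'' lemma: because $C^*$ is a shortest directed cycle in the exchange graph, the $A_1$-edges of $C^*$ form the unique perfect matching between $\{x_i\}$ and $\{y_i\}$ contained in $A_1|_{V(C^*)}$ (and analogously for $A_2$), from which one shows that $B''$ is independent in both $\M{1}$ and $\M{2}$ and hence, being of size $|B|$, is a common basis. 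Then $w(B'') = w(B) - l(C^*) \ge w(B) - \delta$, which violates the $\delta$-optimality of $B$.

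For the sufficiency direction, suppose $B$ is not $\delta$-optimal, so there is a common basis $B'$ with $w(B) - w(B') < \delta$. Let $D = B \setminus B' = \{x_1,\ldots,x_k\}$ and $E = B' \setminus B = \{y_1,\ldots,y_k\}$; the two sets have the same cardinality because $|B|=|B'|$. Applying the strong (Brualdi) basis exchange theorem in matroid $\M{1}$ to the bases $B$ and $B'$ yields a bijection $\pi_1: D \to E$ with $B - x + \pi_1(x) \in \I{1}$ for every $x \in D$; the analogous application in $\M{2}$ yields $\pi_2: D \to E$ with $B - x + \pi_2(x) \in \I{2}$. By construction the arcs $\{(x,\pi_1(x)) : x \in D\} \subseteq A_1$ together with $\{(\pi_2(x),x) : x \in D\} \subseteq A_2$ form a subgraph of $G$ in which every vertex of $D \cup E$ has in-degree one and out-degree one. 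That subgraph therefore decomposes into vertex-disjoint directed cycles $C_1,\ldots,C_m$, and summing lengths using the definition of $l$ gives $\sum_j l(C_j) = \sum_{x \in D} w(x) - \sum_{x \in D} w(\pi_2(x)) = w(D) - w(E) = w(B) - w(B') < \delta$. By averaging, at least one $C_j$ satisfies $l(C_j) < \delta$, contradicting the hypothesis that no cycle of length $\le \delta$ exists.

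The main obstacle is the necessity direction, where I must justify carefully that the cycle $C^*$ of minimum edge count really does yield a common basis $B''$; this is the classical Brualdi--Krogdahl argument and requires ruling out chords of $C^*$ in both $A_1$ and $A_2$ by invoking minimality (any chord would produce a shorter alternating cycle sharing at most one ``side'' of $C^*$ and still of weight $\le \delta$). The sufficiency direction, in contrast, follows almost mechanically from the strong exchange theorem and a cycle decomposition, with averaging handling the multi-cycle case.
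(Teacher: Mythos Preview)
Your overall plan mirrors the paper's proof: the sufficiency direction via perfect matchings in $G_1(B,B')$ and $G_2(B,B')$ and a cycle decomposition is exactly what the paper does (your Brualdi exchange bijection is the content of the paper's Lemma~\ref{lemma:perfect-matching}), and for necessity you correctly select a cycle $C^*$ of minimum edge-count among those with $l(C^*)\le\delta$ and aim to show the induced $A_1$- and $A_2$-matchings are unique so that Lemma~\ref{lemma:unique-perfect-matching} applies.

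The gap is in your justification of uniqueness. You write that ``any chord would produce a shorter alternating cycle \ldots\ still of weight $\le\delta$,'' but this is false for a single chord. If $(x_i,y_j)\in A_1$ is a chord of $C^*$, the shorter cycle obtained by splicing it in has length $l(C^*)$ minus the length of the skipped arc-path in $C^*$, and that skipped path can have \emph{negative} length (it contains $A_2$-arcs with negative weights), so the shorter cycle may well have length $>\delta$. Consequently the standard Brualdi--Krogdahl ``no shortcut'' argument does not apply verbatim: $C^*$ is shortest only among cycles with $l\le\delta$, not globally shortest, and a chord that creates a (cardinality-)shorter cycle of length $>\delta$ does not contradict your choice of $C^*$.

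What actually makes the argument go through---and what the paper carries out explicitly---is an averaging over an entire alternative perfect matching $M'$, not over single chords. One takes an alternating cycle $D$ in $M\cup \overline{M'}$, and for each $M'$-edge $(x_i,y_i)$ in $D$ forms the cycle $C_i$ consisting of that chord together with the $y_i\!\to\! x_i$ path in $C^*$. The crucial structural fact is that $l((x,y))=w(x)$ for every $A_1$-arc, so replacing $(x_i,y_{i-1})\in M$ by $(x_i,y_i)\in M'$ costs nothing; this yields $l(C_i)=l(C^*)-l(P_{y_{i-1}\to y_i})$, and summing over $i$ the $P$-paths wind around $C^*$ an integer number $k'$ of times, giving $\sum_i l(C_i)=k''\,l(C^*)$ with $k''$ strictly smaller than the number of $C_i$'s. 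By averaging, some $C_i$ has $l(C_i)<\delta$ and fewer arcs than $C^*$, the desired contradiction. Without invoking this special form of $l$ and the full-matching averaging, the uniqueness claim (and hence the necessity direction) does not follow.
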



\hide{
We'll refer to two well-known lemmas regarding the relationship between bases of a matroid and matchings in the exchange graph. Let $G(B,B')$ denote the subgraph induced on the $G$ by the vertex sets $B \setminus B'$ and $B' \setminus B$.
\begin{lemma}
\label{lemma:perfect-matching}
  If $B'$ is a basis of  matroid \M{1} $[\M{2}]$, then $G_1(B,B')[G_2(B,B')]$ contains a perfect matching.
\end{lemma}

\begin{lemma}
\label{lemma:unique-perfect-matching}
  For $B' \subseteq S$, if $G_1(B,B') [G_2(B,B')]$ has a \emph{unique} perfect matching, then $B'$ is a basis of \M{1} $[\M{2}]$.
\end{lemma}

Observe the asymmetry in the above two lemmas. The requirement of a \emph{unique} perfect matching causes the following proof to be a bit more involved.
}
\begin{proof}
We'll refer to two well-known lemmas~\cite{schrijver2003} regarding the relationship between bases of a matroid and matchings in the exchange graph. Let $G_1 = (V,A_1)$ and $G_2 = (V,A_2)$ be the subgraphs of $G$ induced by the two matroids respectively. Further for $B' \subset X$, let $G(B,B')$ denote the subgraph induced on the $G$ by the vertex sets $B \setminus B'$ and $B' \setminus B$.
\begin{lemma}
\label{lemma:perfect-matching}
  If $B'$ is a basis of  matroid \M{1} $[\M{2}]$, then $G_1(B,B')[G_2(B,B')]$ contains a perfect matching. $\hfill\qed$
\end{lemma}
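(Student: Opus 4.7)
The statement is the classical basis exchange theorem of Brualdi specialized to the form needed for the exchange graph, so the plan is to prove it by verifying Hall's marriage condition on the bipartite subgraph of $G_1$ induced by the vertex sets $B \setminus B'$ and $B' \setminus B$. Since $B$ and $B'$ are both bases of $\M{1}$ they have equal cardinality, so $|B \setminus B'| = |B' \setminus B|$, and a perfect matching between these two sides is exactly what we need. The $\M{2}$ case follows by symmetry (swapping the orientation of arcs in $A_2$), so I would treat only $\M{1}$.

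To check Hall's condition, fix an arbitrary $S \subseteq B \setminus B'$ and let $N(S) \subseteq B' \setminus B$ denote its neighborhood in $G_1$, i.e., the set of $y \in B' \setminus B$ for which there exists $x \in S$ with $B - x + y \in \I{1}$. I would show $|N(S)| \geq |S|$ as follows. The set $B \setminus S$ is independent in $\M{1}$ as a subset of the basis $B$. Consider the set $T = (B \setminus S) \cup B'$; since $B' \subseteq T$ is a basis, $T$ has full rank in $\M{1}$. Hence $B \setminus S$ can be augmented, inside $T$, to a basis of $\M{1}$ by adjoining elements from $T \setminus (B \setminus S)$. Because $S \subseteq B \setminus B'$ gives $S \cap B' = \emptyset$, we have $T \setminus (B \setminus S) = B' \setminus B$, so the augmentation yields a set $S^* \subseteq B' \setminus B$ with $|S^*| = |S|$ such that $B^* := (B \setminus S) \cup S^*$ is a basis of $\M{1}$.

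It remains to promote this collective exchange to individual arcs, which is the delicate step. For each $y \in S^*$, consider the unique fundamental circuit $C_B(y) \subseteq B \cup \{y\}$ with respect to $B$ in $\M{1}$. I claim $C_B(y)$ must meet $S$: otherwise $C_B(y) \subseteq (B \setminus S) \cup \{y\} \subseteq B^*$, contradicting the independence of $B^*$. Pick any $x \in S \cap C_B(y)$; then $B - x + y$ is independent, so $(x,y) \in A_1$, which places $y$ in $N(S)$. Consequently $S^* \subseteq N(S)$, giving $|N(S)| \geq |S^*| = |S|$. Hall's marriage theorem then furnishes the required perfect matching in $G_1(B,B')$.

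The main obstacle is precisely the last paragraph: the augmentation property only supplies a single basis $B^*$, which is a statement about a simultaneous swap of $|S|$ elements, whereas a perfect matching demands individual pairwise exchanges. Bridging this gap through the fundamental-circuit argument is the crucial ingredient; once Hall's condition is in hand, the rest is immediate.
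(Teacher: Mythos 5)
Your proof is correct. Note first that the paper itself offers no argument for this lemma: it is stated with an immediate \verb|\qed| and attributed to Schrijver's textbook \cite{schrijver2003}, so there is no ``paper proof'' to match against. What you have supplied is a complete, self-contained derivation, and it is essentially the classical one (Brualdi's exchange theorem, Corollary 39.12a in Schrijver): verify Hall's condition on $G_1(B,B')$ by augmenting the independent set $B\setminus S$ inside the full-rank set $(B\setminus S)\cup B'$ to a basis $B^*=(B\setminus S)\cup S^*$ with $S^*\subseteq B'\setminus B$, and then convert this collective exchange into individual arcs by observing that each fundamental circuit $C_B(y)$, $y\in S^*$, must meet $S$ (else it would sit inside $B^*$), so that $S^*\subseteq N(S)$ and $|N(S)|\ge|S|$. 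Every step checks out: $S\cap B'=\emptyset$ does give $T\setminus(B\setminus S)=B'\setminus B$; $x\in C_B(y)$ does imply $B-x+y\in\I{1}$, hence $(x,y)\in A_1$; and since $|B\setminus B'|=|B'\setminus B|$, a matching saturating $B\setminus B'$ is perfect. The only thing your write-up buys beyond the paper is self-containedness; the only thing the paper's citation buys is brevity. If you wanted an alternative route, the same statement also follows by induction on $|B\setminus B'|$ using strong basis exchange, but the Hall-plus-fundamental-circuit argument you chose is the cleaner and more standard one.
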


\begin{lemma}
\label{lemma:unique-perfect-matching}
  For $B' \subseteq X$, if $G_1(B,B') [G_2(B,B')]$ has a \emph{unique} perfect matching, then $B'$ is a basis of \M{1} $[\M{2}]$. $\hfill\qed$
\end{lemma}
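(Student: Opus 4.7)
The plan is to induct on $k := |B' \setminus B|$, which equals $|B \setminus B'|$ because the existence of a perfect matching in $G_1(B,B')$ forces the two vertex classes to have the same size, and hence $|B'| = |B|$. I write only the argument for $\M{1}$; the $\M{2}$ case is symmetric. The base case $k=0$ is immediate, since then $B' = B$ is already a basis of $\M{1}$.

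For the inductive step I would invoke the classical fact that a bipartite graph with a unique perfect matching contains a degree-$1$ vertex in each vertex class; iterating this peeling, the matched pairs can be relabeled $(x_1,y_1),\ldots,(x_k,y_k)$ so that $x_1$ has a unique neighbor in $G_1(B,B')$, namely $y_1$. Equivalently, $B - x_1 + y_j \notin \I{1}$, and so $x_1 \notin C_B(y_j)$ for every $j \geq 2$, where $C_B(\cdot)$ denotes the fundamental circuit in $B$. Set $B_1 := B - x_1 + y_1$, which is a basis of $\M{1}$ because $(x_1,y_1) \in G_1(B,B')$. Since $|B' \setminus B_1| = k-1$, the inductive hypothesis will apply to the pair $(B_1,B')$ provided I can show that $G_1(B_1,B')$ has a unique perfect matching — which must then be $\{(x_2,y_2),\ldots,(x_k,y_k)\}$.

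The only matroid-specific ingredient I need is the standard fundamental-circuit update identity: if $z \notin B_1$ and $x_1 \notin C_B(z)$, then $C_{B_1}(z) = C_B(z)$. This holds because $C_B(z) \subseteq B - x_1 + z \subseteq B_1 + z$ is a circuit contained in $B_1 + z$, and $B_1 + z$ has a unique circuit since $B_1$ is a basis. Applying the identity with $z = y_j$ for each $j \geq 2$ (valid because $x_1 \notin C_B(y_j)$) shows that the membership relation $x_i \in C_{B_1}(y_j)$ agrees with $x_i \in C_B(y_j)$ on the vertex set $\{x_2,\ldots,x_k\} \cup \{y_2,\ldots,y_k\}$, so as bipartite graphs $G_1(B_1,B') = G_1(B,B') - x_1 - y_1$. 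Peeling the degree-$1$ vertex $x_1$ together with its match $y_1$ preserves uniqueness of the perfect matching, since any alternative matching in the peeled graph would extend by $(x_1,y_1)$ to a second perfect matching of $G_1(B,B')$, contradicting the hypothesis. Hence $G_1(B_1,B')$ has the unique perfect matching $\{(x_2,y_2),\ldots,(x_k,y_k)\}$, the induction closes, and $B'$ is a basis of $\M{1}$. The main obstacle is isolating the circuit-update identity cleanly; once that is in hand, the rest of the argument is essentially graph-theoretic peeling.
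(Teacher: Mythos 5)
Your proof is correct, but note that the paper does not actually prove this lemma at all: it is stated with an immediate end-of-proof mark and imported as one of ``two well-known lemmas'' from Schrijver, so there is no in-paper argument to compare against. The standard textbook proof is non-inductive: uniqueness of the perfect matching is used to order the matched pairs so that the bipartite adjacency matrix is triangular, one assumes $B'$ contains a circuit, and a contradiction is extracted via a fundamental-circuit/closure argument at the extremal index. Your peeling induction is a legitimate repackaging of the same structure --- the degree-one vertex you extract at each stage is precisely what yields the triangular ordering --- and every ingredient you rely on checks out: the existence of a degree-one vertex in each class of a bipartite graph with a unique perfect matching follows from the usual alternating-cycle argument; $B_1 = B - x_1 + y_1$ is a basis because $(x_1,y_1)$ is an exchange edge; the update identity $C_{B_1}(z) = C_B(z)$ for $x_1 \notin C_B(z)$ holds because $C_B(z)$ is then a circuit inside $B_1 + z$, which has a unique circuit; this gives $G_1(B_1,B') = G_1(B,B') - x_1 - y_1$ on the vertex classes $\{x_2,\dots,x_k\}$ and $\{y_2,\dots,y_k\}$; and uniqueness survives peeling since a second matching of the smaller graph would extend by $(x_1,y_1)$. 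What your route buys is a self-contained, elementary argument using only the unique-circuit property of $I + z$ for independent $I$; the one thing to make explicit is that your induction quantifies over all bases of $\M{1}$ (the reference basis changes from $B$ to $B_1$ at each step), not just the fixed common basis of the paper --- you do this implicitly and it is harmless, since $G_1$ depends only on $\M{1}$ and the reference basis.
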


{\bf Sufficiency:}  
This is the easy direction.
Let $B'$ be any common basis other than $B$. Applying Lemma \ref{lemma:perfect-matching}, we know that $G(B,B')$ has two perfect matchings (one each in $G_1(B,B')$ and $G_2(B,B')$). Union of these two perfect matchings yields a collection of cycles $\mathcal{C}$. Further, by construction, by traversing these cycles, one can transform $B \rightarrow B'$ and hence, we have
$w(B') = w(B) - \sum_{C \in \mathcal{C}}l(C)$.
Therefore, since we have $l(C) > \delta$ for all cycles, we are guaranteed that $w(B') < w(B) - \delta$ as desired.

{\bf Necessity:}
Ideally, we would like to say that every cycle in $G$ leads to a swapping such that the set so obtained is also independent in both the matroids. This would immediately imply that a cycle of small length would lead to a common basis $B'$ which is not much smaller than $B$.

However, the presence of a cycle simply implies the presence of a perfect matching (one in each direction) which may not be \emph{unique}. For example, Figure \ref{fig:original} shows an instance of an arborescence problem (left), and the associated exchange graph (right). Here $G$ contains a cycle a-x-b-y-a which leads to a new set ${x,y,c}$ which is not an arborescence. 

\begin{figure}[htbp]
 \centering
 \includegraphics[width=0.5\textwidth]{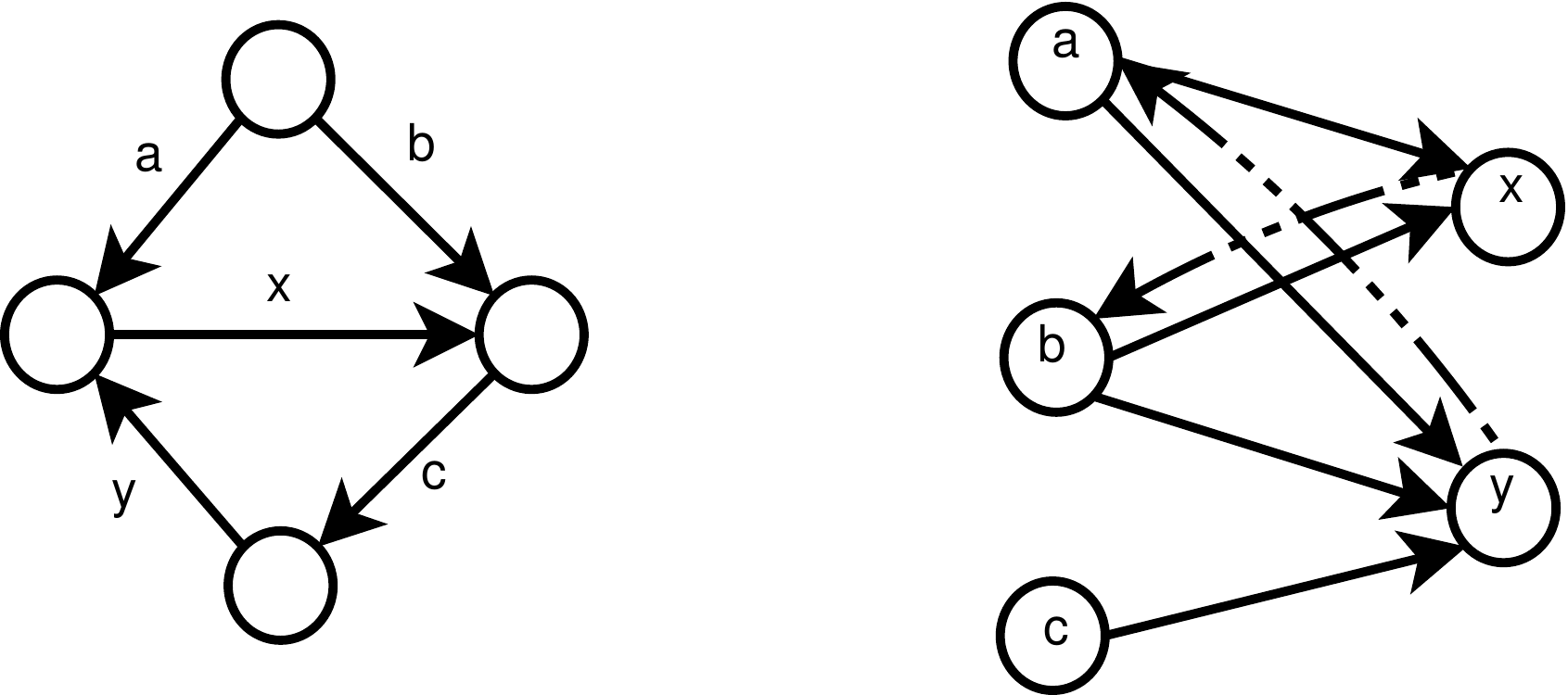}
 \caption{Instance showing every cycle in $G$ need not lead to a common basis}
 \label{fig:original}
\end{figure}

In the previous example, observe that if the cycle a-x-b-y-a were to have small weight, that would imply that at least one of a-y-a or b-x-b cycles too has small weight both of which lead to a feasible solution. This observation motivates us to look at the \emph{smallest} cycle of weight less than $\delta$ and hope that it does induce an \emph{unique} perfect matching.

Suppose that the graph has a cycle having weight less than $\delta$. Let $C$ be the smallest (in terms of number of arcs) such cycle. Look at the graph induced by the vertex set of the cycle. We claim that this induced subgraph has a unique perfect matching (one in each direction). Here we prove the claim for one direction. $C$ being an even cycle trivially contains a perfect matching $M$ from $B$-side to $X \setminus B$-side. Suppose there exists another perfect matching $M'$. For every edge $(x,y)$ in $M' \setminus M$, the edge along with the path between $y$ and $x$ in $C$ cause a cycle. Further, each such cycle is smaller (number of edges) than $C$. 

Let $\bar{M}$ denote the matching $M$ with edge directions reversed. The union of $M'$ and $\bar{M}$ now forms a collection of cycles. Consider any such cycle $D$. WLOG let the cycle be $(x_0, y_0, x_1, y_1, \ldots, x_k, y_k, x_0)$ such that the $(x_{i+1}, y_i)$ are edges in $M$ (i.e. $(y_i, x_{i+1}) \in \bar{M}$) and $(x_i,y_i) \in M'$. [All arithmetic is modulo $k+1$]. We'll now be interested in the length of the path between these vertices in the original cycle $C$. Let $C_i$ denote the cycle formed by the edge $(x_i,y_i)$ and the path between $y_i$ and $x_i$ in $C$. We have, 
\[l(C_i) = l(C) - l(\text{Path from $x_i$ to $y_i$ in $C$}) + l((x_i,y_i))\]
Since $(x_i, y_{i-1}) \in M$, 
\[l(\text{Path from $x_i$ to $y_i$ in $C$}) = l((x_i,y_{i-1})) + l(\text{Path from $y_{i-1}$ to $y_i$ in $C$})\]
Further since by construction $l((x,y_i)) = l((x,y_j)) ( = \pm w(x))$, we have
\[l(C_i) = l(C) - l(\text{Path from $y_{i-1}$ to $y_i$ in $C$})\]
Let $P_{i-1 \rightarrow i}$ denote this path. Summing over all $(x_i,y_i)$ edges in $D$, we get
\begin{align*}
 \sum_{i=0}^k l(C_i) &= k l(C) - (l(P_{k \rightarrow 0}) + l(P_{0 \rightarrow 1}) + \ldots + l(P_{k-1 \rightarrow k})) \\
 &= k l(C) - k' l(C) \\
\intertext{ $\uparrow$ Since we start from $y_k$, go around the $C$ and reach $y_k$ back}
 &= k'' l(C)\\
 &< k'' \delta
\intertext{The sum of $k$ weights is less than $k'' \delta$ with $k'' < k$, which implies}
\exists &C_i, \text{such that } l(C_i) < \delta
\end{align*}

But this is a contradiction since $C$ was the smallest cycle having weight less than $\delta$. Hence, the perfect matching $M$ is unique.
Similarly, the perfect matching induced by $C$ in the other direction too is unique. Applying Lemma \ref{lemma:unique-perfect-matching} successively on both sides, we know that $B'$ obtained by exchanging as per $C$ is a common basis for both matroids. Further, we have
\begin{align*}
 w(B') &= w(B) - l(C) \\
 w(B') &> w(B) - \delta
\end{align*}

Hence we have proved that if $G$ has a cycle with small weight, then $B$ is not $\delta$-optimal, thus proving the necessity of the claim.
\end{proof}

\subsection{Lower bounding cycles}
\label{sec:lower-bound-cycles}

In order to use Lemma \ref{lemma:optimality-condition} to solve the inverse matroid intersection problem efficiently using quadratic programming, we need a way to formulate this condition as a polynomial number of linear constraints. We now explore a technique to express the condition that a given graph has no small (of length less than $\delta$) cycles concisely.
Say we are given a directed graph $G = (V,A)$ and our task is to assign edge-lengths so that all cycles in $G$ have weight at least $\delta$. Letting the edge-lengths to be variables, the feasible region in this case is unbounded and is defined by a constraint for every cycle in $G$, i.e. we have the region $R_1$ in $m$ dimensions defined by - 
\begin{align}
	{\bf R_1:}&& \nonumber\\
  \sum_{e \in C} l_e &\geq \delta &\text{For all cycles C}
\end{align}
Of course, this formulation has an exponential number of constraints. Although the ellipsoid algorithm can be used to solve the quadratic program in polynomial time, it is often too slow for practical use. We now show that we can obtain a concise extended formulation by adding a few extra variables.

Suppose we have variables $d_{xy}$ representing the shortest distance between vertices $x$ and $y$. In this case, the graph has no cycle of weight less than $\delta$ if and only if $d_{xx} \geq \delta$ for all vertices $x$ (assume $d_{xx} = \infty$, if $x$ is not in any cycle). Consider the region $R_2$ in $m+n^2$ dimensions. 
\begin{align}
	{\bf R_2:}&& \nonumber\\
  d_{xy} &\leq l_{(xy)} &\text{For all $(x,y) \in A$} \label{r2edge}\\
  d_{xz} &\leq d_{xy} + l_{(yz)} &\text{For all $x,z \in V$ and $y$ s.t. $(y,z) \in A$} \label{r2triangle}\\
  d_{xx} &\geq \delta &\text{For all $x \in V$} \label{r2delta}
\end{align}
Constraints \eqref{r2edge} and \eqref{r2triangle} enforce triangle inequality, and \eqref{r2delta} enforce the condition that all cycles are large. We now prove that optimizing any function of $l$ over $R_1$ is equivalent to optimizing the same over $R_2$.

\begin{lemma}
\label{lemma:delta-cycles}
  $R_1$ is identical to the projection of $R_2$ on the $m$ dimensions corresponding to the edge-lengths. 
\end{lemma}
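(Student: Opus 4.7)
The plan is to prove the two inclusions $\pi(R_2) \subseteq R_1$ and $R_1 \subseteq \pi(R_2)$ separately, where $\pi$ denotes projection onto the edge-length coordinates. The key intuition is that the $d_{xy}$ variables encode shortest-walk lengths in the weighted digraph, so constraint \eqref{r2delta} says exactly that every cycle has total length at least $\delta$.

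For the easier direction $\pi(R_2) \subseteq R_1$, I would fix any $(l,d) \in R_2$ and any directed cycle $C = v_0 \to v_1 \to \cdots \to v_k = v_0$. Starting from $d_{v_0 v_1} \leq l_{(v_0,v_1)}$ via \eqref{r2edge}, a straightforward induction using \eqref{r2triangle} gives $d_{v_0 v_i} \leq \sum_{j=1}^{i} l_{(v_{j-1},v_j)}$ for each $i$. Setting $i=k$ and combining with $d_{v_0 v_0} \geq \delta$ from \eqref{r2delta} yields $\sum_{e \in C} l_e \geq \delta$, so $l \in R_1$.

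For the reverse direction $R_1 \subseteq \pi(R_2)$, given edge lengths $l \in R_1$, I would explicitly construct the auxiliary variables by defining $d_{xy}$ to be the length of the shortest directed walk from $x$ to $y$ that uses at least one edge (so $d_{xx}$ is the minimum length of any directed cycle through $x$, and we set $d_{xy} = +\infty$, or any sufficiently large $M$, when no such walk exists). Constraint \eqref{r2edge} is immediate since the single edge $(x,y)$ is itself a valid walk, and constraint \eqref{r2triangle} follows because concatenating a shortest $x \to y$ walk with the edge $(y,z)$ produces an $x \to z$ walk of length $d_{xy} + l_{(y,z)}$. Finally, since $l \in R_1$, every cycle through $x$ has length at least $\delta$, so $d_{xx} \geq \delta$ and \eqref{r2delta} holds.

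The main obstacle, though minor, is bookkeeping around vertices that lie on no cycle (or pairs $(x,y)$ with no directed walk), where one has to pick a large enough surrogate for $+\infty$ to keep \eqref{r2triangle} consistent; replacing infinities by any constant larger than $\delta$ plus the sum of all edge lengths suffices. Once the explicit witness $d$ is shown to satisfy all three constraint families, the lemma follows immediately by definition of projection.
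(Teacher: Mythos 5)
Your overall strategy coincides with the paper's: for $\pi(R_2) \subseteq R_1$ you telescope constraint \eqref{r2triangle} around a cycle and combine with \eqref{r2delta}, and for $R_1 \subseteq \pi(R_2)$ you exhibit the shortest-walk distances (walks with at least one edge, so that $d_{xx}$ is the cheapest cycle through $x$) as an explicit witness; the paper's proof does exactly this, calling $d$ ``the actual distance function.''

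The one step that does not survive scrutiny is your patch for the unreachable and acyclic cases: replacing every $+\infty$ by a single large constant $M$ does not in general satisfy \eqref{r2triangle}. The edge lengths in this paper's applications can be negative (in the exchange graph the arcs of $A_2$ have length $-w(y) \leq 0$, and in the matching auxiliary graph $l(x,z) = w(x,y) - w(y,z)$), so if $y$ and $z$ are both unreachable from $x$ and $(y,z) \in A$ has $l_{(yz)} < 0$, the required inequality $d_{xz} \leq d_{xy} + l_{(yz)}$ becomes $M \leq M + l_{(yz)}$, which is false. A correct patch: since $l \in R_1$ and $\delta \geq 0$, the graph has no negative cycles, hence there is a potential $h : V \rightarrow \mathbb{R}$ with $h(z) \leq h(y) + l_{(yz)}$ for every arc $(y,z)$ (e.g.\ Bellman--Ford distances from a super-source joined to all vertices by zero-length arcs); for pairs with no walk from $x$ to $y$ set $d_{xy} = M + h(y)$ with $M$ large enough to dominate all finite distances. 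Then \eqref{r2edge}, \eqref{r2triangle} and \eqref{r2delta} all hold. To be fair, the paper's own proof also glosses over this point (it simply sets $d_{xx} = \infty$ for acyclic vertices and is silent about unreachable pairs), so your argument matches the paper's level of rigor everywhere except that your explicit constant-$M$ claim, as stated, is incorrect and should be replaced by the potential-shifted values.
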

\begin{proof}
{ $\mathbf {R_1 \subseteq \text{\bf Projection}(R_2)}$: }
  Let $l : E \rightarrow \mathbb{R}$ denote a point in $R_1$. Since the constraints \eqref{r2edge} and \eqref{r2triangle} are always valid for a \emph{true} distance function, let $d : V \times V \rightarrow \mathbb{R}$ denote the actual distance function in the graph induced by $l$. Such a $d$ definitely satisfies constraints \eqref{r2edge} and \eqref{r2triangle}. Additionally, for all vertices $x$ belonging to some cycle, since all cycles under $l$ have weight at least $\delta$, we have $d_{xx} \geq \delta$. For a vertex $x$ which does not belong to any cycle, one can simply set $d_{xx} = \infty$.

{\bf Projection$\mathbf{(R_2) \subseteq R_1}$:}
Consider a point in $R_2$. We now have the lengths of edges $l_e$ as well as some $d_{xy}$ values. Consider any cycle $C = (x_1, x_2, \ldots, x_k, x_1)$ in the graph. Applying constraint \eqref{r2triangle} repeatedly we get
\begin{align}
d_{x_1x_1} &\leq l_{(x_1x_2)} + l_{(x_2x_3)} + \ldots + l_{(x_{k-1}x_k)} + l_{(x_kx_1)}
\intertext{and also by constraint \eqref{r2delta}, we have}
  d_{x_1x_1} &\geq \delta
\end{align}
Hence we have, $l_{(x_1x_2)} + l_{(x_2x_3)} + \ldots + l_{(x_{k-1}x_k)} + l_{(x_kx_1)} \geq \delta$, i.e. $\sum_{e \in C} l_e \geq \delta$ which means that the $l_e$ values are feasible in $R_1$. 
\end{proof}

Hence, optimizing any function of the $l_e$ variables over $R_1$ is equivalent to optimizing it over $R_2$. However, $R_2$ has only $m + mn + n$ constraints and $n^2 + m$ variables.

\subsection{Putting it together}
\label{sec:putting-it-together}

Lemmas \ref{lemma:optimality-condition} and \ref{lemma:delta-cycles} suggest a way to solve the $\delta$-margin inverse matroid intersection problem. As per the requirements of Lemma \ref{lemma:optimality-condition}, given the two matroids and the common basis $B$, construct the exchange graph $G=(V,A = A_1 \cup A_2)$. Let $w:X \rightarrow \mathbb{R}^+$ be the original weight function and let $w'$ be the new weight function which we desire. If $l$ is the arc lengths of $G$, according to the construction of Lemma \ref{lemma:optimality-condition}, $l_{xy} = w'(x)$ and $l_{yx} = -w'(y)$ where $x \in B, y \in S \setminus B$. Further, the objective that we minimize is the $L_2$ norm of $w - w'$. We can now add these additional constraints and the objective to the region $R_2$ as per Lemma \ref{lemma:delta-cycles} to obtain the minimum change on the weights of elements so that the exchange graph has no small cycles and hence $B$ is $\delta-$optimal. 
\begin{align} 
& 
\min_{w'} \sum_{e \in X}(w'(e) - w(e))^2
\quad&\textbf{subj. to:}\quad\\
 & l_{xy} = w'(x),  &\forall (x,y) \in A_1\\
 & l_{yx} = -w'(y),  &\forall (y,x) \in A_2\\
 &d_{xy} \leq l_{xy},  &\forall (x,y) \in A\\
 &d_{xz} \leq d_{xy} + l_{yz},  &\forall x,z \in V, \forall (y,z) \in A\\
 &d_{xx} \geq \delta,  &\forall x \in V
\end{align}

\subsection{Maximum Weight Arborescence}
\label{sec:arborescence}
Given a directed graph, a $r$-arborescence (also known as a branching) is the directed analogue of a spanning tree and is defined as a set of edges $T$ spanning all vertices such that every vertex (except $r$) has exactly one incoming edge in $T$. It is well known that an arborescence in a directed graph is a basis in the intersection of a graphical matroid and a partition matroid.  We analyze the complexity of the above technique for the special case of maximum weight arborescence. Let $G$ denote the graph in question having $n$ vertices and $m$ edges.

The exchange graph $G_{ex}$ has a vertex for every edge of $G$, i.e., $n_{ex} = m$. The bipartition of $G_{ex}$ is such that we have components of size $n$ and $m-n$ respectively. Hence we have $m_{ex} = O(mn)$.
As seen in Section \ref{sec:lower-bound-cycles}, we use $O(n_{ex}^2)$ variables and $O(m_{ex}n_{ex})$ contraints. Thus, putting it all together, we have a quadratic program with $O(m^2)$ variables and $O(m^2n)$ constraints.

The inverse maximum weight arborescence problem is important as it can used as a subroutine in the online learning for dependency parsing~\cite{mcdonald05dependency}. The dependency parse tree of a sentence can be represented as an arborescence over a graph consisting of every word in the sentence as a node. In Appendix \ref{sec:experiments}, we show experimental results for dependency parsing using our framework.

\subsubsection{Shortest $s-t$ paths.}
Given a weighted graph $G=(V,E,w)$, a path $P$ between terminals $s$ and $t$, and a margin $\delta$, the inverse shortest $s$-$t$ path problem is to find a minimum perturbation to $w$ (minimizing the $L_2$ norm) so that $P$ is shorter than all other paths between $s$ and $t$ by at least $\delta$ under the new weight function. As shown by \hide{Hu and Liu~}\cite{zhiquan1998}, the inverse shortest $s$-$t$ path problem can be reduced to the inverse arborescence problem. Let $G'$ be $G$ augmented by adding zero weight edges from $t$ to all other vertices. It can be easily observed that $P$ is the shortest $s$-$t$ path in $G$ if and only if $P$ and a subset of the zero weight edges form the minimum weight $s$-arborescence of $G'$. Thus we can use an algorithm for inverse minimum weight arborescence to solve the inverse shortest path problem.\footnote{Inverse minimum weight arborescence problem can be solved similar to the inverse maximum weight arborescence problem} 

\section{Perfect Matchings in Bipartite Graphs}
\label{sec:perfect-matching}

For the bipartite maximum weight perfect matching inverse problem, the previous technique yields a quadratic program having $O(m^2)$ variables and $O(m^2)$ constraints as the exchange graph is sparse. In this section we show that we can in fact obtain more concise formulations. Recall that for a given edge weighted, bipartite graph $G = (X \cup Y,E,w)$, and a perfect matching $M$, an alternating cycle is a cycle in $G$ in which edges alternate between those that belong to $M$ and those that do not. An alternating cycle $C$ is called $\delta$-augmenting, if $\sum_{e \in C \cap M} w(e) < \sum_{e \in C \setminus M} w(e) + \delta$. The following characterization of a $\delta$-optimal perfect matching is well known.

\begin{lemma} \label{lem:perfect-matching}
A perfect matching $M$ is $\delta$-optimal if and only if the graph contains no $\delta$-augmenting cycles.
\end{lemma}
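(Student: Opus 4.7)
The plan is to prove both directions by standard symmetric-difference arguments, exploiting that in a bipartite graph the symmetric difference of two perfect matchings decomposes into vertex-disjoint alternating cycles.

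For the easy direction (sufficiency of the no-augmenting-cycle condition is actually necessary for $\delta$-optimality), I would assume the graph contains a $\delta$-augmenting alternating cycle $C$ and exhibit another perfect matching that violates $\delta$-optimality. Concretely, set $M' = M \triangle C$. Because $C$ is an even alternating cycle with respect to $M$, the set $M'$ is again a perfect matching. Computing directly,
\[
w(M) - w(M') \;=\; \sum_{e \in C \cap M} w(e) \;-\; \sum_{e \in C \setminus M} w(e) \;<\; \delta,
\]
where the inequality is exactly the definition of $C$ being $\delta$-augmenting. Hence $M$ is not $\delta$-optimal, giving the contrapositive of one direction.

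For the other direction, I would assume no $\delta$-augmenting cycle exists and take an arbitrary perfect matching $M' \neq M$. The symmetric difference $M \triangle M'$ has every vertex of degree $0$ or $2$ and edges alternating between $M$ and $M'$, so it decomposes into vertex-disjoint alternating cycles $C_1, \dots, C_k$ with $k \geq 1$. Since none of the $C_i$ is $\delta$-augmenting,
\[
\sum_{e \in C_i \cap M} w(e) \;\geq\; \sum_{e \in C_i \setminus M} w(e) + \delta \qquad \text{for each } i.
\]
Summing over $i$ and observing that $M \setminus M' = \bigsqcup_i (C_i \cap M)$ and $M' \setminus M = \bigsqcup_i (C_i \setminus M)$, I obtain
\[
w(M) - w(M') \;\geq\; k\delta \;\geq\; \delta,
\]
which is precisely $\delta$-optimality of $M$.

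There is no substantial obstacle: the only thing to be careful about is the decomposition step, where I must invoke the fact that for two perfect matchings in a bipartite graph $M \triangle M'$ is an edge-disjoint union of even alternating cycles (isolated alternating paths cannot appear because every vertex of $G$ is saturated by both $M$ and $M'$, forcing degree $0$ or $2$ in the symmetric difference). The slight asymmetry in the two directions — a single $\delta$-augmenting cycle suffices to break $\delta$-optimality, whereas sufficiency needs a per-cycle bound — is what makes the $k \geq 1$ observation clean: any nontrivial competitor $M'$ contributes at least one cycle, so the $\delta$ gap is achieved regardless of how many cycles appear.
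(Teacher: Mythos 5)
Your proof is correct. The paper itself gives no proof of this lemma---it is stated as ``well known''---so there is nothing to compare line by line; your symmetric-difference argument is the standard one and fills the gap cleanly: the necessity direction works because $M \triangle C$ is always a perfect matching whenever $C$ is an alternating cycle, and the sufficiency direction works because $M \triangle M'$ decomposes into vertex-disjoint even alternating cycles (each vertex has degree $0$ or $2$ since both matchings are perfect), each contributing at least $\delta$ to $w(M)-w(M')$, with $k \geq 1$ cycles since $M' \neq M$. It is worth noting why this lemma is easy while the analogous matroid-intersection condition (Lemma~\ref{lemma:optimality-condition} in the paper) requires real work: for perfect matchings, \emph{every} alternating cycle yields a feasible competitor $M \triangle C$, whereas in the exchange graph for matroid intersection a directed cycle need not yield a common basis, which is why the paper's proof there must pass to a smallest short cycle and establish uniqueness of the induced perfect matchings before invoking the exchange. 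Your argument needs none of that machinery, and correctly so.
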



The central idea is to construct a directed graph $H$ on just the nodes of $X$ such that any directed cycle in $H$ will correspond to an alternating cycle in $G$ (w.r.t to the matching $M$) and vice versa. We construct $H = (X,A)$ to be a directed graph such that $(x,z) \in A$ if and only if $\exists y \in Y$ such that $(x,y) \in M$ and $(y,z) \in E$; further let $l(x,z) = w(x,y) - w(y,z)$. Figure \ref{fig:example} shows an example of this construction.

\begin{figure}[htbp]
  \centering
  \includegraphics[scale=0.6]{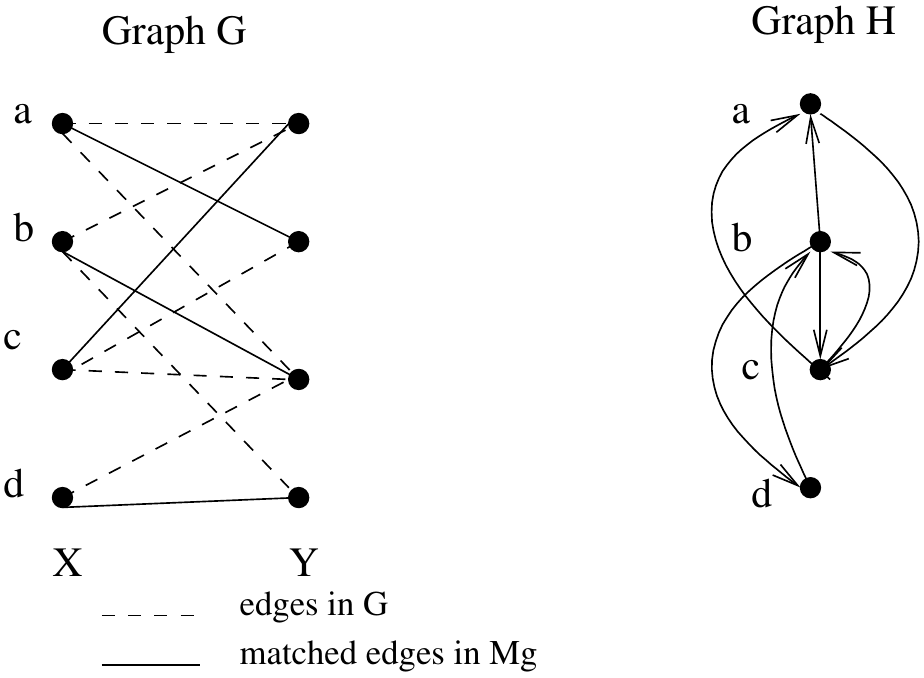}
  \caption{Example to show construction of $H$ from a bipartite graph $G$ and matching $M$}
  \label{fig:example}
\end{figure}
\begin{proposition} \label{prop:perfect-matching}
The auxiliary graph $H$ has a directed cycle of length less than $\delta$ if and only if $G$ has a $\delta$-augmenting alternating cycle.
\end{proposition}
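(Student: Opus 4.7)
The plan is to establish a direct bijection between directed cycles in $H$ and alternating cycles in $G$ that preserves the quantity being compared against $\delta$. Since $M$ is a perfect matching, every $x\in X$ has a unique matched partner $M(x)\in Y$, which is exactly the ``$y$'' used to define arcs out of $x$ in $H$. Concretely, every arc $(x,z)\in A$ is witnessed by the unique $y=M(x)$, and its length is $l(x,z)=w(x,y)-w(y,z)$.

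For the ``if'' direction, suppose $G$ has a $\delta$-augmenting alternating cycle. After rotating, write it as $x_1\,y_1\,x_2\,y_2\,\cdots\,x_k\,y_k\,x_1$ with $(x_i,y_i)\in M$ and $(y_i,x_{i+1})\in E\setminus M$ (indices mod $k$). By construction of $H$, each arc $(x_i,x_{i+1})$ lies in $A$ with length $w(x_i,y_i)-w(y_i,x_{i+1})$, so $x_1\to x_2\to\cdots\to x_k\to x_1$ is a directed cycle in $H$ whose total length equals
\[
\sum_{i=1}^k\bigl(w(x_i,y_i)-w(y_i,x_{i+1})\bigr)=\sum_{e\in C\cap M}w(e)-\sum_{e\in C\setminus M}w(e)<\delta.
\]

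For the ``only if'' direction, take a directed cycle $x_1\to x_2\to\cdots\to x_k\to x_1$ in $H$ of total length less than $\delta$. For each arc $(x_i,x_{i+1})$, let $y_i=M(x_i)$, so that $(x_i,y_i)\in M$ and $(y_i,x_{i+1})\in E$. Concatenating, we obtain the closed alternating walk $x_1\,y_1\,x_2\,y_2\,\cdots\,x_k\,y_k\,x_1$ in $G$. I then need to verify it is actually a simple alternating cycle: the $x_i$ are distinct by hypothesis; the $y_i$ are distinct because $y_i=M(x_i)$ is a function of $x_i$; the $x_i$'s and $y_j$'s lie on opposite sides of the bipartition; and $(y_i,x_{i+1})\notin M$ because $M$ pairs $y_i$ with $x_i\neq x_{i+1}$. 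The per-arc length identity then gives
\[
\sum_{e\in C\cap M}w(e)-\sum_{e\in C\setminus M}w(e)=\sum_{i=1}^k l(x_i,x_{i+1})<\delta,
\]
so the cycle is $\delta$-augmenting.

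The only mildly subtle step is the second direction, where one must rule out degenerate walks: the simplicity of the cycle in $H$ together with $M$ being a perfect matching forces the $y_i$'s to be distinct and forces the would-be ``non-matching'' edges $(y_i,x_{i+1})$ to actually lie outside $M$. With that taken care of, Lemma~\ref{lem:perfect-matching} applied in tandem yields the desired equivalence.
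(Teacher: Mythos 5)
Your proof is correct and follows essentially the same route as the paper's: the same arc-by-arc correspondence between directed cycles in $H$ and alternating cycles in $G$, with the identical telescoping length identity, plus some extra (welcome) care in checking that the walk recovered from a cycle in $H$ is a genuine simple alternating cycle. The closing appeal to Lemma~\ref{lem:perfect-matching} is unnecessary for the proposition itself (that lemma is only combined with it afterwards to build the quadratic program), but this does not affect correctness.
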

\begin{proof}

{\bf If:} Let $C = (x_0,y_0,x_1,y_1,\ldots,x_k,y_k,x_0)$ be a $\delta$-augmenting cycle in $G$ where all $(x_i,y_i) \in M$. By construction, $H$ has a cycle $C' = (x_0,x_1,\ldots,x_k,x_0)$ and $l(C') = \sum_{i=0}^k (w(x_i,y_i) - w(y_i,x_{i+1}))$ (modulo $k+1$) $= \sum_{e \in C \cap M} w(e) - \sum_{e \in C \setminus M} w(e) < \delta$.

{\bf Only If:} Let $C = (x_0,x_1,\ldots,x_k,x_0)$ be a cycle in $H$ with $l(C) < \delta$. By construction, $\exists$ cycle $C' = (x_0,y_0,x_1,y_1,\ldots,x_k,y_k,x_0)$ in $G$. Now, $l(C) = \sum_{i=0}^k (w(x_i,y_i) - w(y_i,x_{i+1}))$ (modulo $k+1$) $= \sum_{e \in C' \cap M} w(e) - \sum_{e \in C' \setminus M} w(e)$. Thus $C'$ is a $\delta$-augmenting cycle in $G$. 
\end{proof}

Using Lemma \ref{lem:perfect-matching} and Proposition \ref{prop:perfect-matching} along with Lemma \ref{lemma:delta-cycles}, we can formulate the inverse perfect matching problem as a quadratic program having $O(n^2)$ variables and $O(mn)$ constraints.



\section{Application : Online learning for structured prediction}
\label{sec:application}

\newcommand{\vDelta}{\vec\Delta}
\newcommand{\vphi}{\vec\phi}

In this section, we present a framework for online learning using inverse combinatorial optimization.
The structured prediction task is to predict a discrete combinatorial structure (such as an arborescence) given a structured input (such as a graph). The learning task is to learn model parameters so that solving a combinatorial optimization problem on the input instance would return the desired output structure. Structured prediction is extensively used in natural language processing tasks such as obtaining parse trees of a sentence, or automatic language translation. 

In the online learning setting, we are presented with a set of $T$
training samples. These consist of an input $x_t$ (for instance, a sentence)
and an output $y_t$ (for instance, a syntactic analysis of this
sentence described as an arborescence on a graph over the words in the
sentence \cite{zeman1998statistical,mcdonald05dependency}).  Each edge in
this graph is parameterized by a set of $F$ features that, for
instance, indicate how likely one word is to be the subject of
another.
Thus, each training sample is a pair $(x_t, y_t)$ where $x_t$ is a
graph parameterized by features on edges, and $y_t$ is the desired
output sub-structure (such as a spanning tree, or an arborescence, or
a matching depending on the application). The task is to learn a
vector (of length $F$) of parameters $\vec{\theta}$ such that when
edge weights are computed as inner products between the $\vec{\theta}$
and the edge's features, the output obtained by computing an optimal
sub-structure (spanning tree, etc.) is the desired output with some
margin. 

Algorithm \ref{learningalgorithm} describes the generic online
learning framework for structured predcition. It is parameterized by an user-defined loss function
$\ell(y_t, \hat y)$ that specifies the loss incurred by the prediction $\hat y$ with respect to the training solution $y_t$. Algorithm \ref{learningalgorithm} is an adaptation of the Passive-Aggressive MIRA algorithm~\cite{koby-thesis} for structured prediction.


\begin{algorithm}
  $\vec{\theta}_1 = \vec{0}$ \\
    \For{$t = 1$ to $T$} {
      Obtain training example $x_t,y_t$\\
      $w \leftarrow$ weight function s.t. $w(e) = \vec{\theta}_t \cdot \vec{f_e}$ where $\vec{f_e}$ is feature vector of edge $e$\\
      $\hat{y} \leftarrow$ optimal sub-structure for graph $x_t$ under weights $w$ \\
      Suffer loss $\de_t = \ell(y_t, \hat y)$\\
      Update $\vec{\theta}_{t+1} = \argmin_{\vec{\theta}'} ||\vec{\theta}' - \vec{\theta}_{t}||_2^2 $ {\bf such that} \\
      \hspace*{5mm} $w' \leftarrow$ weight function s.t. $w'(e) = \vec{\theta}' \cdot \vec{f_e}$ where $\vec{f_e}$ is feature vector of edge $e$\\
      \hspace*{5mm} $y_t$ is the $\de_t$-optimal sub-structure for graph $x_t$ under weights $w'$
    }
  Return $\vec\theta_{T+1}$
\caption{Generic online learning framework}
\label{learningalgorithm}
\end{algorithm}
Note that the minimization problem solved for each training sample is
exactly $\delta$-inverse optimization where we minimize the
perturbations to the feature parameters instead of the edge
weights. In this framework, the different inverse
optimization problems we considered have applications for different
structured predictions. For example, maximum weight arborescences are
used to predict the parse tree of a sentence \cite{zeman1998statistical,mcdonald05dependency}, while maximum weight
matchings are used for language translation and word alignments~\cite{taskar05discrim}.

Since we have shown that we can efficiently solve the inverse optimization problems for a variety of combinatorial structures, we can extend the error bounds of the MIRA algorithm~\cite{koby-thesis} to work for learning the corresponding structured prediction models.
In this section, we present both convergence results and loss bounds
for our generic online learning framework.  
The proofs for these bounds closely follow those in Crammer's Ph.D. dissertation
\cite{koby-thesis}
and are relegated to 
Appendix \ref{sec:proofs-learn-theory} for clarity and brevity.

The statement of the convergence result depends on a set of dual
variables obtained from the optimization problem in the ``Update''
step of Algorithm~\ref{learningalgorithm}. This implicitly encodes
constraints over all possible outputs; we denote the dual variable for output $y$ on the $t^{\text{th}}$ example by $\al_y^t$.
We can show that the cumulative sum of these dual variables is bounded
by a constant independent of $T$, which implies convergence of the
learning algorithm.

\begin{restatable}[Convergence]{thm}{convergence} \label{thm:convergence}
  Let $\{ (x_t, y_t) \}_{t=1}^T$ be a sequence of structured examples.
Let $\vth^*$ be any vector that separates the data with a positive
margin $\de^* > 0$. 
Assume the loss function is upper bounded: $\ell(y_t,\hat y)
\leq A$.  Then the cumulative sum of coefficients is upper bounded by:
\begin{equation}
  \sum_{t=1}^T \sum_{y \in \cY^t} \al^t_y
  \leq
  2A \left(\frac {\norm{\vth^*}} {\de^*}\right)^2.
\end{equation}
\end{restatable}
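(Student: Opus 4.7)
The plan is to use the classical MIRA-style potential argument from Crammer's thesis: track $\vth_t \cdot \vth^*$ from below and $\|\vth_t\|^2$ from above per round, then close the two estimates with Cauchy--Schwarz.

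First I would read the update off the KKT conditions of the per-round quadratic program. Writing $\vDelta_{t,y} = \vphi(x_t,y_t) - \vphi(x_t,y)$ for the feature difference between the true output and a competitor, the $\de_t$-optimality constraints become $\vth' \cdot \vDelta_{t,y} \geq \de_t$ for every $y \neq y_t$. Stationarity yields the representer identity $\vth_{t+1} = \vth_t + \sum_{y} \al^t_y \vDelta_{t,y}$, and complementary slackness forces $\vth_{t+1} \cdot \vDelta_{t,y} = \de_t$ whenever $\al^t_y > 0$. These two per-round identities are the only structural facts I need from the QP.

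For the lower bound, the separating-margin assumption $\vth^* \cdot \vDelta_{t,y} \geq \de^*$ for every $y \neq y_t$ gives $\vth_{t+1}\cdot\vth^* - \vth_t\cdot\vth^* = \sum_y \al^t_y (\vth^*\cdot\vDelta_{t,y}) \geq \de^*\sum_y \al^t_y$; telescoping from $\vth_1 = \vec 0$ yields $\vth_{T+1}\cdot\vth^* \geq \de^* \sum_{t,y} \al^t_y$. For the upper bound I expand $\|\vth_{t+1}\|^2 - \|\vth_t\|^2 = 2\vth_{t+1} \cdot (\vth_{t+1}-\vth_t) - \|\vth_{t+1}-\vth_t\|^2$ and substitute the representer and slackness identities, producing $2\de_t \sum_y \al^t_y - \|\vth_{t+1}-\vth_t\|^2$. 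Dropping the non-positive remainder and using $\de_t = \ell(y_t,\hat y) \leq A$ gives $\|\vth_{t+1}\|^2 - \|\vth_t\|^2 \leq 2A \sum_y \al^t_y$, and telescoping yields $\|\vth_{T+1}\|^2 \leq 2A \sum_{t,y} \al^t_y$. Then Cauchy--Schwarz chains the two bounds as $\de^* \sum_{t,y} \al^t_y \leq \vth_{T+1}\cdot\vth^* \leq \|\vth^*\|\,\|\vth_{T+1}\| \leq \|\vth^*\|\sqrt{2A \sum_{t,y} \al^t_y}$, and squaring and dividing by $\sum_{t,y}\al^t_y$ gives exactly the stated bound.

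The main obstacle is the KKT bookkeeping: complementary slackness has to be invoked to convert $\sum_y \al^t_y\, \vth_{t+1}\cdot\vDelta_{t,y}$ into $\de_t\sum_y \al^t_y$ cleanly, since otherwise the loss cap $A$ never surfaces and the bound on $\|\vth_{T+1}\|^2$ fails to be linear in $\sum_{t,y}\al^t_y$, which would break the Cauchy--Schwarz step. Once those identities are in hand, everything else is routine algebra.
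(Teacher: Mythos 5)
Your proof is correct, and it takes a genuinely different route from the paper's, though both live in the same MIRA-analysis family and extract the same two facts from the per-round QP (the representer identity for the update and the loss cap $\de_t \leq A$). The paper works with the single potential $\norm{\vth_t - \vth^*}^2$: it telescopes $\De_t = \norm{\vth_t-\vth^*}^2 - \norm{\vth_{t+1}-\vth^*}^2$ to get the upper bound $\norm{\vth^*}^2$, lower-bounds each $\De_t$ by substituting the representer form of the update into the Lagrangian and invoking nonnegativity of the optimal dual value, $\cL^t(\vec\al)\geq 0$, arrives at $2(\de^*-A)\sum_{t,y}\al^t_y \leq \norm{\vth^*}^2$, and then removes the awkward dependence on the sign of $\de^*-A$ by rescaling $\vth^*$ and $\de^*$ by $c = 2A/\de^*$. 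You instead run the classical two-potential perceptron-style argument: a lower bound on $\vth_{T+1}\cdot\vth^*$ from the separation margin, an upper bound on $\norm{\vth_{T+1}}^2$ that is linear in the cumulative dual mass, and Cauchy--Schwarz to close; the linearity step is exactly where you need complementary slackness ($\vth_{t+1}\cdot\vphi^t_{\De y}=\de_t$ on the support of $\vec\al^t$), a KKT identity the paper never invokes, since it only needs that $\vec\al=\vec 0$ is dual feasible with value $0$. Both routes yield the identical constant $2A(\norm{\vth^*}/\de^*)^2$; yours buys a cleaner finish (no Lagrangian bookkeeping and no rescaling trick), while the paper's needs slightly less from the optimizer. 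Two small points to tidy in your version: note that the per-round QP is feasible under the separation hypothesis (rescale $\vth^*$ to satisfy the $\de_t$ constraints), so the KKT conditions you rely on do hold, and dispose of the degenerate case $\sum_{t,y}\al^t_y = 0$ separately before dividing through in the last step.
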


\noindent

However, it is not enough to show that the algorithm converges: it
could converge to a useless solution! We wish to show that in the
process of learning it does not make too many errors. In particular,
we show that Algorithm~\ref{learningalgorithm} incurs a total hinge loss
bounded by a constant also independent of $T$, which implies that at
some point it has exactly solved the learning problem.

\begin{restatable}[Total Loss]{thm}{totalloss} \label{thm:thl}
  Under the same assumptions as above, assume further that the
  norm of the examples are bounded by $R$.  Then,
  the cumulative hinge loss ($\mathcal{H}_{\delta_t}$) suffered by the algorithm
  over $T$ trials is bounded by:
  \begin{equation}
    \sum_{t=1}^T \mathcal{H}_{\delta_t}(\theta_t,(x_t, y_t))
    \leq
    8A \left( \frac {R \norm{\vth^*}} {\de^*}\right)^2.
  \end{equation}
\end{restatable}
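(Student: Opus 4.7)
The plan is to derive a per-round bound of the form $\mathcal{H}_{\delta_t}(\theta_t,(x_t,y_t)) \le 4R^2 \sum_{y} \alpha^t_y$ and then sum over $t$, invoking Theorem~\ref{thm:convergence} to bring in the factor $2A(\norm{\vth^*}/\de^*)^2$. Multiplying $4R^2$ by $2A(\norm{\vth^*}/\de^*)^2$ gives exactly $8A(R\norm{\vth^*}/\de^*)^2$, which matches the statement. So the entire argument reduces to establishing the per-round inequality.

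To set up the per-round bound, I would write out the KKT conditions of the update QP. The constraints in the Update step enforce $\vth'\cdot(\vphi(x_t,y_t)-\vphi(x_t,y)) \ge \delta_t$ for every $y\in\cY^t$. Introducing multipliers $\al^t_y \ge 0$ and differentiating the Lagrangian gives the canonical MIRA/PA form
\begin{equation}
\vth_{t+1}-\vth_t \;=\; \sum_{y} \al^t_y\bigl(\vphi(x_t,y_t)-\vphi(x_t,y)\bigr),
\end{equation}
and feasibility of $\vth_{t+1}$ gives $\vth_{t+1}\cdot(\vphi(x_t,y_t)-\vphi(x_t,\hat y)) \ge \de_t$, where $\hat y$ is the prediction made on round $t$. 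Subtracting off $\vth_t\cdot(\vphi(x_t,y_t)-\vphi(x_t,\hat y))$ and using the definition of hinge loss (which, because $\hat y = \arg\max_y \vth_t\cdot \vphi(x_t,y)$, is attained at $\hat y$) yields
\begin{equation}
\mathcal{H}_{\delta_t}(\vth_t,(x_t,y_t)) \;\le\; (\vth_{t+1}-\vth_t)\cdot\bigl(\vphi(x_t,y_t)-\vphi(x_t,\hat y)\bigr).
\end{equation}

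Now I would apply Cauchy--Schwarz together with the norm bound $\norm{\vphi(x_t,y)}\le R$ to get $\mathcal{H}_{\delta_t}\le 2R\,\norm{\vth_{t+1}-\vth_t}$. For the second factor, I substitute the KKT expression and use the triangle inequality to get $\norm{\vth_{t+1}-\vth_t} \le 2R \sum_y \al^t_y$. Chaining these two inequalities yields $\mathcal{H}_{\delta_t}(\vth_t,(x_t,y_t)) \le 4R^2\sum_y \al^t_y$, and summing over $t=1,\dots,T$ and invoking Theorem~\ref{thm:convergence} finishes the proof.

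The main obstacle is being careful that the hinge loss on the left is indeed realized at the predicted output $\hat y$ (so that the feasibility condition on $\vth_{t+1}$ can be applied against the relevant constraint) and that the KKT representation of $\vth_{t+1}-\vth_t$ uses exactly the same dual variables $\al^t_y$ whose cumulative sum is bounded by Theorem~\ref{thm:convergence}. These are essentially bookkeeping issues inherited from the convergence analysis, but they need to be lined up so that no extra constants creep in. Once the dual variables are consistent, the two Cauchy--Schwarz-style bounds combine cleanly to give the claimed constant $8A$.
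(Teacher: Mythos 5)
Your proposal is correct and is essentially the paper's own argument: you derive the same per-round bound $\mathcal{H}_{\delta_t}(\theta_t,(x_t,y_t)) \le 4R^2\sum_{y}\alpha^t_y$ from the dual (KKT) representation $\theta_{t+1}-\theta_t=\sum_y\alpha^t_y\,\phi^t_{\Delta y}$ together with feasibility of $\theta_{t+1}$ at the loss-maximizing output $\hat y$, then sum over $t$ and invoke Theorem~\ref{thm:convergence} to get the constant $8A(R\|\theta^*\|/\delta^*)^2$. The only cosmetic difference is that the paper packages this step as a separate lemma via H\"older's inequality with $p=1$, $q=\infty$ followed by Cauchy--Schwarz on each inner product, whereas you apply Cauchy--Schwarz and the triangle inequality directly; the constants come out identically.
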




\bibliography{bibfile}
\bibliographystyle{splncs}

\newpage
\appendix
\section{Minimum Cost Maximum Flow}
\label{sec:mincostflow}
Certain problems in structured prediction can be solved using inverse minimum cost maximum flow problems. Consider, for example, the task of assigning reviewers to papers. Suppose we would like to have $R$ reviewers per paper, and at most $P$ papers assigned per reviewer. Such a scenario can be easily modelled as a generalization of bipartite matching with non-unit supply and demands using a minimum cost maximum flow problem. In order to learn the weights of such an instance (suitability of a reviewer for a paper), we can use structured prediction methods that require solving inverse minimum cost maximum flow problems. Formally,

\begin{definition}[Inverse Min Cost Max Flow Problem]
Given a directed graph $G = (V,A)$, a capacity function on edges $c : E \rightarrow \mathbb{R}^+$, a cost function $w : E \rightarrow \mathbb{R}$, and a feasible maximum flow $f : E \rightarrow \mathbb{R}^+$, the inverse minimum cost maximum flow problem is to find a new cost function $w'$ such that $||w - w'||_2$ ($L_2$ norm) is minimized and $f$ is $\delta$-optimal under $w'$.
\end{definition}

Using Lemma \ref{lemma:delta-cycles}, we can easily formulate the inverse min cost max flow problem as a convex optimization problem if we can characterize a minimum cost flow in terms of small cycles in an auxiliary graph. Indeed, the following lemma is well-known.

\hide{
\begin{lemma}
Given an instance of a minimum cost maximum flow problem $G = (V,E,c,w)$, and a maximum flow $f$, $f$ is of minimum cost if and only if the residual graph $G_f$ has no negative weight cycles.
\end{lemma}
}

\begin{lemma}
Given an instance of a minimum cost maximum flow problem $G = (V,E,c,w)$, and a maximum flow $f$, $f$ is $\delta$-optimal if and only if the residual graph $G_f$ has no cycle having weight less than $\delta$.
\end{lemma}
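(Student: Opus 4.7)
The plan is to generalize the classical residual-graph characterization (a max flow has minimum cost iff $G_f$ has no negative cycle) by tracking the margin $\delta$ through the flow-decomposition argument. The key idea is that differences of two max flows correspond to nonnegative circulations supported on $G_f$, every such circulation decomposes into positive combinations of directed cycles of $G_f$, and the cost gap between the two flows equals the cycle-weighted sum of cycle weights. A margin of $\delta$ on flows therefore translates to a margin of $\delta$ on cycles.

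For the ``only if'' direction I argue contrapositively: suppose $G_f$ contains a directed cycle $C^*$ with $w(C^*) < \delta$. Pushing one unit of flow along $C^*$ (feasible in the integer-capacity setting since each residual capacity along $C^*$ is at least one, or any sufficiently small positive amount otherwise) yields a new max flow $f'$ of the same value, because adding a circulation preserves both flow conservation and the net source-to-sink flow value. Its cost satisfies $\mathrm{cost}(f') - \mathrm{cost}(f) = w(C^*) < \delta$, so $f$ is not $\delta$-optimal.

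For the ``if'' direction, assume every directed cycle in $G_f$ has weight at least $\delta$ and let $f' \neq f$ be any other max flow. Define $g(u,v) = \max\{0, f'(u,v) - f(u,v)\}$ on each original arc and $g(v,u) = \max\{0, f(u,v) - f'(u,v)\}$ on its reverse; then $g$ is a nonnegative integer circulation supported on $G_f$. By flow decomposition, $g = \sum_i \epsilon_i\, \chi_{C_i}$ for directed cycles $C_i \subseteq G_f$ and positive integers $\epsilon_i$. Using the residual-cost convention that a reverse arc $(v,u)$ carries weight $-w(u,v)$, one checks that $\mathrm{cost}(f') - \mathrm{cost}(f) = \sum_i \epsilon_i\, w(C_i)$. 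Since $f' \neq f$ forces $\sum_i \epsilon_i \geq 1$ and each $w(C_i) \geq \delta$, the cost gap is at least $\delta$, which is precisely what $\delta$-optimality demands.

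The main obstacle is the careful bookkeeping for the decomposition step: I need the residual cost convention to line up so that cycle weights sum exactly to the cost difference without leftover sign errors, and I need integer capacities so that the decomposition coefficients $\epsilon_i$ are integers bounded below by $1$. Without integrality, arbitrarily small fractional $\epsilon_i$ could make $\sum_i \epsilon_i\, w(C_i)$ smaller than $\delta$ even when every individual $w(C_i) \geq \delta$, and the if-direction would collapse.
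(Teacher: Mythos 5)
The paper states this lemma without proof, merely calling it well-known, so there is no in-paper argument to compare against; your flow-decomposition (cycle-canceling) proof is the standard route and is surely what the authors had in mind. Your argument is correct, with two remarks. First, in the ``only if'' direction, if you push $\epsilon \le 1$ units rather than a full unit, the cost change is $\epsilon\, w(C^*)$, not $w(C^*)$; the conclusion still holds, since either $w(C^*) \ge 0$, in which case $\epsilon\, w(C^*) \le w(C^*) < \delta$, or $w(C^*) < 0 \le \delta$, but you should say this rather than assert $\mathrm{cost}(f') - \mathrm{cost}(f) = w(C^*)$ in the fractional case. Second, the integrality concern you raise at the end is not mere bookkeeping: it is a genuine hypothesis that the paper's statement (which allows real-valued capacities and flows in its problem definition) omits. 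If the feasible set consists of all fractional maximum flows and $\delta > 0$, the ``if'' direction fails whenever $G_f$ contains any directed cycle at all, because pushing an arbitrarily small $\epsilon > 0$ around a cycle of weight at least $\delta$ yields a distinct maximum flow whose cost exceeds that of $f$ by only $\epsilon$ times the cycle weight, which is below $\delta$; with positive margin over fractional flows, $\delta$-optimality would force $G_f$ to be acyclic. The lemma is correct when feasible solutions are taken to be integral flows (integer capacities), which matches the structured-prediction applications the paper has in mind (e.g., reviewer assignment), and under that reading your decomposition with integer coefficients $\epsilon_i \ge 1$, together with $w(C_i) \ge \delta \ge 0$, closes the argument. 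So: a correct proof by the standard approach, plus a caveat that deserves to be made explicit and that the paper glosses over.
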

\section{Shortest Path Trees}
\label{sec:sptrees}

Suppose we are given a directed graph $G = (V,E)$ with a weight function $w$ on edges, a subtree $T_{spt}$ rooted at $r$, and a margin $\delta$. The inverse shortest path tree problem is then to minimally modify the edge weights so that $T_{spt}$ becomes the $\delta$-optimal shortest path tree, i.e., for every vertex $v (\neq r)$ in $G$, the path prescribed by $T_{spt}$ is the $\delta$-optimal shortest path from $r$ to $v$. 

To solve this problem we define variables $d_v$ representing distance labels for each vertex and generate the following quadratic program. 
\begin{align}
  & 
\min_{w'} \sum_{e \in E}   (w'(e) - w(e))^2&&
\quad\textbf{subj. to:}\quad \\ 
 & d_r = 0 && \label{eq:dr}\\
 & d_a + w'(a,b) = d_b \quad
  &\forall e &= (a,b) \in T_{spt} \label{eq:dtspt}\\
 & d_a + w'(a,b) \geq d_b + \delta \quad
 &\forall e&=(a,b) \notin T_{spt} \label{eq:dntspt}
\end{align}
In other words, our claim is that we can require the distance labels to be the length of the unique path in the shortest path tree. For every other edge $e = (a,b)$, we would like the path to $b$ coming via $a$ to be longer by at least $\delta$. We now prove that these conditions are necessary and sufficient. 
\begin{proof}
  {\bf Sufficient:} Suppose we find a solution to the convex program. The length of the path from $r$ to $v$ via $T_{spt}$ is exactly $d_v$ due to the equality constraints. Consider any other path $P'(r,v) = [r=v_0,v_1, \ldots, v_k = v]$ from $r$ to $v$. The length of this path is exactly $\sum_{i=0}^{k-1} w'(v_i, v_{i+1}) \geq \sum_{i=0}^{k-1} (d_{v_{i+1}} - d_{v_i} + \delta')$ where $\delta' = 0$ if the edge $(v_i, v_{i+1})$ belongs to $T_{spt}$ and $\delta$ otherwise. Adding, we see that this is at least $d_v + \delta$ since at least one edge on $P'$ does not belong to $T_{spt}$.

 {\bf Necessary:} Let $w'$ be a feasible weight function, i.e., $T_{spt}$ is $\delta$-optimal under $w'$. Let $d_v$ denote the distance of vertex $v$ from $r$ under weights $w'$. Since $T_{spt}$ forms a shortest path tree, its easy to see that constraints (\ref{eq:dr}) and (\ref{eq:dtspt}) are satisfied. Now suppose that an edge $e' = (a,b) \notin T_{spt}$ does not satisfy constraint (\ref{eq:dntspt}). Then we have $d_b + \delta > d_a + w'(a,b)$. Since there is a path from $r$ to $a$ of length $d_a$, we now have a path to $b$ of length shorter than $d_b + \delta$ which violates $\delta$-optimality. Hence, we have a contradiction. 
\end{proof}


\section{Proofs of Learning Theory Results}
\label{sec:proofs-learn-theory}

\convergence*

\begin{proof}
Let $\cL^t(\vec\al)$ denote the lagrangian dual of the optimization problem solved in the update step of Algorithm \ref{learningalgorithm}. We have
\begin{align}
\cL^t(\vec\al) = -\frac{1}{2} ||\sum_{y \in \mathcal{Y}^t} \alpha^t_y \phi^t_{\Delta y}||^2 + \sum_{y \in \mathcal{Y}^t} \alpha^t_y(\delta_t - \theta_t \phi^t_{\Delta y}) \label{eq:lagrangian}
\end{align}
Here $\phi^t_{\De_y} = \phi(x_t,y_t) - \phi(x_t,y)$ is shorthand for the difference in feature vectors, while $\delta_t$ is the specified margin which is taken to be the current loss incurred.\\
Define $\De_t = \norm{\vth_t - \vth^*}^2 - \norm{\vth_{t+1}-\vth^*}^2$. We will establish a bound on the cumulative sum of the dual coeffecients by bounding the sum of $\De_t$s above and below.

\noindent Upper bounding:
\begin{align}
\sum_{t=1}^T \De_t &= \norm{\vth_1-\vth^*}^2 - \norm{\vth_{T+1}-\vth^*}^2  \\
&= \norm{\vth^*}^2 - \norm{\vth_{T+1}-\vth^*}^2 \label{eq:zerotheta1}\\
&\leq \norm{\vth^*}^2 \label{eq:upperbound}
\end{align}
Equation \ref{eq:zerotheta1} is obtained by substituting $\vth_1 = 0$


\noindent Lower bounding:
\begin{align}
\De_t
&= \norm{\vth_t - \vth^*}^2 - \norm{\vth_{t+1}-\vth^*}^2 \\
&= \norm{\vth_t - \vth^*}^2
 - \norm{\vth_t + \sum_y \al_y^t \vphi_{\De y}^t - \vth^*}^2 \\
&= \norm{\vth_t - \vth^*}^2
 - \Bigg[ \norm{\vth_t - \vth^*}^2
        + \norm{\sum_y \al_y^t \vphi_{\De y}^t}^2\nonumber\\
&\qquad\qquad\qquad\qquad
        + 2\dotp{(\vth_t - \vth^*)}{\sum_y \al_y^t \vphi_{\De y}^t}
        \Bigg] \\
&= - \norm{\sum_y \al_y^t \vphi_{\De y}^t}^2
   - 2 \dotp{\vth_t}{\sum_y \al_y^t \vphi_{\De y}^t} 
   + 2 \dotp{\vth^*}{\sum_y \al_y^t \vphi_{\De y}^t} \\
\intertext{Substituting for $\cL^t(\vec\al)$ from Eq. \eqref{eq:lagrangian}, we get}
\De_t&= 2 \left[ 
         \cL^t(\vec\al)
       + {\sum_y \dotp{\vth^*}{\al_y^t \vphi_{\De y}^t}}
       - \sum_y \al_y^t \delta_t \right] \\
\intertext{As $\vec\al = 0$ is dual feasible and $\cL^t(0) = 0$, we have $\cL^t(\vec\al) \geq 0$}       
&\geq 2 \left[
         \sum_y \dotp{\vth^*}{ \al_y^t \vphi_{\De y}^t}
       - \sum_y \al_y^t \delta_t \right] \\
&\geq 2 \left[
         \sum_y \al_y^t \delta^*
       - \sum_y \al_y^t \delta_t \right] \\       
&\geq 2 \sum_y \al_y^t (\delta^* - \delta_t) \\
&\geq 2 (\delta^* - A) \sum_y \al_y^t
\intertext{In the last step, we used the bound on the instantaneous loss.  In the
previous, we used the assumption on the margin achieved by $\vth^*$.
The rest is algebra.}
\intertext{Summing over all $t$ we get}
\sum_{t=1}^T \De_t &\geq 2(\delta^* - A) \sum_{t=1}^T \sum_{y} \alpha^t_y \label{eq:lowerbound}
\intertext{Combining the bounds in Equations \eqref{eq:upperbound} and \eqref{eq:lowerbound}}
2 (\delta^* - A) \sum_t \sum_y \al^t_y &\leq \norm{\vth^*}^2
\end{align}
Now, fix $c = \frac {2A} {\delta^*}$ and scale $\vth^*$ and $\delta^*$ by $c$. 
Rearrange to get the desired bound.
\end{proof}

\begin{lemma}
Under the same assumptions as before, and writing $\vec\Phi^t_{\De \circledcirc}$ to denote a $\card{\cY^t} \times F$ matrix whose rows are the feature vectors for all possible outputs, and where $p$ and $q$ are dual (i.e., $\frac 1p+\frac 1q=1$), the optimal dual variables $\al^t_y$ satisfy:
\begin{equation}
\delta_t - \dotp{\vth_t}{\vphi^t_{\De_y}} \leq \norm{ \vec\al^t }_p \norm{ \vec\Phi^t_{\De \circledcirc} \vphi^t_{\De y} }_q
\end{equation}
\end{lemma}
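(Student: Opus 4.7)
My plan is to derive the inequality in two short steps by combining a KKT stationarity identity for the update QP with primal feasibility of the new weight vector, and then recognizing the right-hand side as a Euclidean inner product to which Hölder's inequality applies.

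First, I would write out the QP solved in the update step explicitly:
\begin{equation*}
\min_{\vth'} \tfrac{1}{2}\norm{\vth' - \vth_t}^2 \quad \text{s.t.} \quad \dotp{\vth'}{\vphi^t_{\De y}} \;\geq\; \de_t \quad \text{for all } y \in \cY^t,
\end{equation*}
whose Lagrangian is $\tfrac{1}{2}\norm{\vth' - \vth_t}^2 - \sum_{y} \al_y\bigl(\dotp{\vth'}{\vphi^t_{\De y}} - \de_t\bigr)$. Setting its gradient with respect to $\vth'$ to zero, which is the same stationarity substitution already used to pass to the dual form in Eq.~\eqref{eq:lagrangian}, yields
\begin{equation*}
\vth_{t+1} \;=\; \vth_t \;+\; \sum_{y' \in \cY^t} \al^t_{y'}\,\vphi^t_{\De y'}.
\end{equation*}

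Next, I would invoke primal feasibility of $\vth_{t+1}$: for every fixed $y \in \cY^t$, $\dotp{\vth_{t+1}}{\vphi^t_{\De y}} \geq \de_t$. Substituting the KKT expression for $\vth_{t+1}$ and rearranging,
\begin{equation*}
\de_t - \dotp{\vth_t}{\vphi^t_{\De y}} \;\leq\; \sum_{y' \in \cY^t} \al^t_{y'}\,\dotp{\vphi^t_{\De y'}}{\vphi^t_{\De y}}.
\end{equation*}
By the definition of $\vec\Phi^t_{\De \circledcirc}$, the $y'$-th entry of the column vector $\vec\Phi^t_{\De \circledcirc}\,\vphi^t_{\De y}$ is exactly $\dotp{\vphi^t_{\De y'}}{\vphi^t_{\De y}}$, so the sum on the right is the standard Euclidean inner product of $\vec\al^t$ with $\vec\Phi^t_{\De \circledcirc}\,\vphi^t_{\De y}$. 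Applying Hölder's inequality with conjugate exponents $p$ and $q$ then gives
\begin{equation*}
\sum_{y' \in \cY^t} \al^t_{y'}\,\dotp{\vphi^t_{\De y'}}{\vphi^t_{\De y}} \;\leq\; \norm{\vec\al^t}_p\,\norm{\vec\Phi^t_{\De \circledcirc}\,\vphi^t_{\De y}}_q,
\end{equation*}
which chains with the previous display to produce the claimed bound.

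I do not expect a real obstacle: the lemma is a routine manipulation of the KKT conditions for the update QP, coupled with one invocation of Hölder. The only item that warrants a moment of care is verifying the sign in the stationarity identity $\vth_{t+1} - \vth_t = \sum_{y'}\al^t_{y'}\vphi^t_{\De y'}$ so that it is consistent with the dual form given in Eq.~\eqref{eq:lagrangian}; this is easily checked by substituting the identity back into the Lagrangian and seeing that Eq.~\eqref{eq:lagrangian} is recovered exactly.
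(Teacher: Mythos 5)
Your proposal is correct and follows essentially the same route as the paper's own proof: it uses primal feasibility of the updated parameter vector (the enforced $\delta_t$-optimality constraints), the KKT/dual stationarity identity expressing $\vth_{t+1}$ as $\vth_t$ plus the $\al$-weighted sum of feature differences, a rewrite of the resulting sum as the inner product with $\vec\Phi^t_{\De \circledcirc} \vphi^t_{\De y}$, and one application of H\"older's inequality. The only difference is that you spell out the QP and the sign check explicitly, which the paper leaves implicit.
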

\begin{proof}
By the enforced $\delta_t$ - optimality conditions, we know that for all $t$ and $y \in \mathcal{Y}^t$ - 
\begin{align}
\dotp{\vth_{t+1}}{\vphi^t_{\De_y}} &\geq \delta_t\\
\intertext{Substituting for $\vth_{t+1}$ in terms of $\vth_t$ using the dual optimality conditions}
 \dotp{(\vth_t + \sum_{z \in \cY^t} \al^t_z \vphi^t_{\De_z} )}{\vphi^t_{\De y}} &\leq \delta_t\\
\delta_t - \dotp{\vth_t}{\vphi^t_{\De_y}}
&\leq
\sum_{z \in \cY^t} \al^t_z \dotp{\vphi^t_{\De z}}{\vphi^t_{\De_y}} \\
&=
\dotp{\vec\al^t}{ \vec\Phi^t_{\De \circledcirc} \vphi^t_{\De_y} } \\
&\leq
\norm{\vec\al^t}_p  \times \norm{ \vec\Phi^t_{\De \circledcirc} \vphi^t_{\De_y} }_q\label{eq:margin-size-bound}
\end{align}
The first equality is rewriting things in terms of the matrix
$\vec\Phi$, the final step is H\"older's inequality.  
\end{proof}

\totalloss*
\begin{proof}
  On a round $t$ with non-zero hinge loss, take the $y$ in
  Eq~\eqref{eq:margin-size-bound} that has maximal hinge loss
  (numerator).  Take $p=1$ and $q=\infty$.  Then:
  \begin{align}
\sum_t H_{\delta_t}(\vec \theta_t, (x_t,y_t)) &\leq \sum_t (\delta_t - \dotp{\vec \theta_t}{\phi^t_{\De_y}})\\
&\leq \sum_t \norm{\vec\al^t}_1 \norm{ \vec\Phi^t_{\De \circledcirc} \vphi^t_{\De y} }_\infty \\
    &= \sum_t (\sum_z \ab{\al^t_z}) (\max_z \dotp{\vphi^t_{\De z}}{\vphi^t_{\De y}}) \\
\intertext{Using H\"older's inequality again with $p = q = 2$}
    &\leq \sum_t (\sum_z \ab{\al^t_z}) (\max_z \norm{\vphi^t_{\De z}}_2 \norm{\vphi^t_{\De y}}_2) \\
\intertext{By assumption the norm is bounded,}
&\leq \sum_t (\sum_z \ab{\al^t_z}) (\max_z (2R) (2R)) \\
    &\leq 4 R^2 \sum_t \sum_z \ab{\al^t_z} \\
\intertext{Bounding the cumulative sum using Theorem \ref{thm:convergence}, we get}
    &\leq 4 R^2 2A \left(\frac {\norm{\vth^*}} {\delta^*}\right)^2 \\
    &= 8A \left( \frac {R \norm{\vth^*}} {\delta^*}\right)^2
  \end{align}
\end{proof}

\section{Experimental Analysis}
\label{sec:experiments}

We perform preliminary experiments to demonstrate the efficacy of the online learning framework. We consider two structured prediction tasks: dependency parsing and word alignment for language translation.
In dependency parsing, the input is a sentence, and the goal is to find its dependence parse, i.e. evaluate how words in a sentence relate to one-another, forming a tree starting with an empty root node. Figure \ref{fig:dependency-parse} shows an example of a dependency parse tree of a sentence. The input sentence is considered as a complete graph with a vertex for each word and each edge parameterized by features. The task now is to learn feature weights so that the maximum weight spanning tree in the graph corresponds to the parse tree of the sentence. In word alignment for language translation, we are given two equivalent sentences in two languages and the task is to identify corresponding words. The input instance in this case is considered to be a complete bipartite graph, and the output would be an assignment (matching). Once again given features on edges, the task is to learn feature weights so that the maximum weight perfect matching in the graph would correspond to the correct word alignment. 
\begin{figure}
  \centering
  \includegraphics[width=0.6\textwidth]{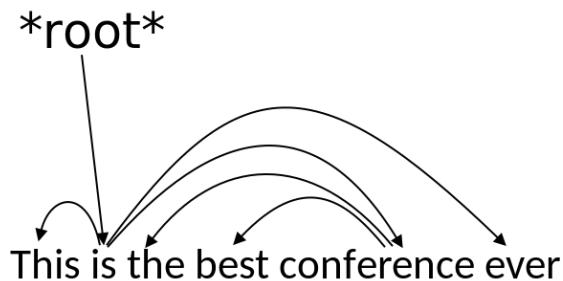}
\caption{Example dependency parse tree. The tree describes the relations between head words and their dependents in the sentence}
  \label{fig:dependency-parse}
\end{figure}

\subsection{Maximum Spanning Trees}
\label{sec:mstexp}
Although dependency parsing is better modelled by directed arborescences, for the sake of simplicity we consider only spanning trees in directed graphs in our experiments.
For these experiments we used the CoNLL shared task English treebank
\cite{conll07} in order to predict undirected dependency arcs in English
sentences. 
Each word only depends on one word, but can have many dependents.  We
use a $1500$-sentence subset of the training data ($36k$ words) and
the test data consists of $3800$ sentences ($90k$ words).  We train
for undirected unlabeled dependencies and evaluate in the same manner.
We use standard features: words, word suffixes, position, edge length
and predicted part of speech tags.

An averaged structured perceptron baseline obtains an accuracy of
$82.7\%$ on the test data.  One-best MIRA achieves an accuracy of
$84.2\%$, which is very close to the performance of a structured SVM
trained by stochastic gradient descent (accuracy of $84.4\%$).  Our
approach achieves a significant improvement on this of $85.1\%$.

\begin{table}
\centering

\begin{tabular}{|c||c|c}
\hline
Algorithm & Averaged $\theta$ \\ \hline \hline 
SVM & 84.4 \\
Perceptron & 82.7 \\
1-Best MIRA & 84.2 \\
Our Algorithm & \textbf{85.1} \\
\hline

\end{tabular} \vspace{2mm}
\caption{Accuracy results for undirected dependency parsing
over the various baselines and our algorithm.}
\label{tab:deptable}
\end{table}

\subsection{Bipartite Matching} \label{sec:matching2}

We are considering German/English alignment at the word level. Given two equivalent sentences in their respective languages, we want to choose an alignment that best fits the ``equality'' of the respective words in each sentence.

Our data is comprised of 217 manually word-aligned sentences from \cite{CCB05para}, with many-to-many matchings possible. Since we are restricted to one-to-one alignment (matching) ,we enforce this restriction here by pruning extra edges until we have 1-to-1 matchings only.   

The graph structure is as follows: Given a German sentence of length $n$ and an English sentence of length $m$, we construct a complete bipartite graph $G=(X_{m}, Y_{n}, E)$, where $X_{m}$ is the German sentence of length $m$. $Y_{n}$ is an English sentence of length $n$. $E$ are the possible alignments for the two sentences, which are currently fully connected. Further, to ensure that the induced matching will be perfect(as required from our problem definition), we add $n$ extra vertices(referred to as dummy vertices) to the German sentence and $m$ dummy vertices to the English sentence, resulting in $G'=(X_{m} \cup X'_{n}, Y_{n} \cup Y'_{m}, E')$.

Next we will describe the structure of $E'$ in the graph $G'$. Each vertex of the original $G$ is fully connected with every other vertex in $G$ as before. Each dummy vertex $x'_i \in X'_{n}$ is fully connected to each dummy node $y_j' \in Y'_{m} ,\forall j$. In addition, each dummy vertex $x'_i \in X'_{n}$ is also connected to its single, respective real word vertex $y_i \in Y_{n}$, and similarly $Y'_{m}$ to $X_{m}$. If an alignment for a particular word in $X_{m}$ or $Y_{n}$ is not present in $M_{g}$ due to the nature of word alignment being somewhat sparse, we designate the truth edge to be the one connected to its corresponding dummy vertex and add this to $M_{g} \subset E$, giving us $M'_{g} \subset E'$. This will allow us a perfect matching for any two sentences we are given, fitting this particular problem's framework.

Each edge of the graph $G'$ corresponds to a set of feature values for the corresponding words. These include 
features such as their Dice Coefficient (computed from Europarl corpus), relative word positioning 
in the sentence, string match without vowels, and others detailed in \cite{taskar05discrim}. 
We also created slack features and weights, one for each viable edge in the graph. 
For example, the feature ``$4\_10$'' means ``this edge connects node 4 to node 10''. 
These features are very small valued (the result being that making these features important 
 is expensive), and are used to ensure feasibility during training time for each example presented.
 The edges corresponding to dummy vertices will \emph{only} have these slack features, 
which will make certain this perfect matching problem is feasible. 
After each example is learned, we immediately forget these slack weights.

\subsubsection{Results}

\begin{table}[h]
\centering
\begin{tabular}{|c||c|c}
\hline
Algorithm & Averaged $\theta$ \\ \hline \hline 
Dice=1  & 36.40 \\
Perceptron & 40.52 \\
1-Best MIRA & 13.76 \\
Struct SVM & 30.42 \\
Our Algorithm & \textbf{44.00} \\
\hline
\end{tabular} \vspace{2mm}
\caption{Accuracy results for 1-to-1 matching with 10 passes over data with 80-20 train-test split}
\label{tab:matchtable}
\end{table}

One simple baseline we are testing against is the use of only DICE values. This is being used as a sanity-check for other algorithms' performance. This simple baseline obtained an accuracy of 36.4\%.
Averaged MIRA performs quite poorly on this dataset with these features, at 13.76\%. In our experiments we are only comparing our algorithm with 1-best MIRA, which is unfair as it only constructs the constraints based on the single best matching based on our old weight vector w. Limited to optimizing against one matching appears to be insufficient to learn anything of value. Again however, our algorithm supersedes any $k$ chosen for MIRA, as we ensure our matching beats every single other possible matching.
Averaged Perceptron performs better on this dataset, at 40.52\%. The variance from run to run was very small compared to the other baselines.
Our algorithm obtains an accuracy of 44.0\%.
This failure of 1-Best MIRA is analogous to a corresponding failure of
structured SVMs optimized with cutting plane(30.42\%)
\cite{tsochantaridis05svmiso,joachims09cuttingplane}: when the problem
is ``hard'' (in the sense that $\vth^*$ has high loss), this approach
appears to perform quite poorly in practice \cite{papandreou11perturb,tarlow12rom}.


\end{document}